\newcommand{\ignore}[1]{}
\newcommand{\cD}{{\mathcal D}}
\newcommand{\cP}{{\mathcal P}}
\newcommand{\cA}{{\mathcal A}}
\newcommand{\cC}{{\mathcal C}}
\newcommand{\bE}{{\mathbb E}}
\newcommand{\bR}{{\mathbb R}}
\newcommand{\bZ}{{\mathbb Z}}
\newcommand{\one}{{\mathbf{1}}}
\theoremstyle{plain}
\newtheorem{theorem}{Theorem}[section]
\newtheorem{lemma}[theorem]{Lemma}
\newtheorem{corollary}[theorem]{Corollary}
\theoremstyle{definition}
\newtheorem{definition}[theorem]{Definition}
\theoremstyle{remark}
\newtheorem{remark}[theorem]{Remark}
\newcommand{\mae}{{\operatorname{\mathsf{MA-err}}}}
\title{Multigroup Robustness}
\date{}
\author{Lunjia Hu\thanks{Stanford University. Supported by Moses Charikar’s and Omer Reingold's Simons Investigators awards and the Simons Foundation Collaboration on the Theory of Algorithmic Fairness. Email: \texttt{lunjia@stanford.edu}}\qquad Charlotte Peale \thanks{Stanford University. Supported by the Simons Foundation Collaboration on the Theory of Algorithmic Fairness. Email: \texttt{cpeale@stanford.edu}} \qquad Judy Hanwen Shen\thanks{Stanford University. Supported by the Simons Foundation Collaboration on the Theory of Algorithmic Fairness. Email: \texttt{jhshen@stanford.edu}}}
\begin{document}

\maketitle
\begin{abstract}
    To address the shortcomings of real-world datasets, robust learning algorithms have been designed to overcome arbitrary and indiscriminate data corruption. However, practical processes of gathering data may lead to patterns of data corruption that are localized to specific partitions of the training dataset. Motivated by critical applications where the learned model is deployed to make predictions about people from a rich collection of overlapping subpopulations, we initiate the study of \emph{multigroup robust} algorithms whose robustness guarantees for each subpopulation only degrade with the amount of data corruption \emph{inside} that subpopulation. When the data corruption is not distributed uniformly over subpopulations, our algorithms provide more meaningful robustness guarantees than standard guarantees that are oblivious to how the data corruption and the affected subpopulations are related. Our techniques establish a new connection between multigroup fairness and robustness.
\end{abstract}

\section{Introduction}
The gap between standard distributional assumptions and practical dataset limitations has been well-studied in machine learning literature -- from unintended distribution shift to adversarially crafted data manipulations. 
Corresponding notions of \emph{robustness} have been developed to reflect the goal of learning well in the presence of adversarially corrupted data, as well as attacks demonstrating that small amounts of corrupted data can seriously impact performance.

While attacks that target specific subgroups have been proposed~\citep{jagielski2021subpopulation}, they give the adversary the power to change any point in the dataset to achieve its goal--even the ability to modify seemingly unrelated points outside the subgroup of interest. While this is reasonable as a strong worst-case adversarial assumption, data corruption issues are often far more localized in reality. In surveys, response bias may compromise answers from certain subpopulations \citep{meng2018statistical, bradley2021unrepresentative}. As another example, when amassing internet data for training large models, certain sources can be less trustworthy or more toxic than others~\citep{dodge2021documenting}. Since limitations to datasets may be isolated to certain groups, it is important to develop a fine-grained notion of robustness that ensures groups do not suffer undue harm due to corruption in unrelated data.
\paragraph{A fine-grained robustness notion} We consider a new data-aware notion of robustness that we term \emph{multigroup robustness}. At a high-level, a multigroup-robust learning algorithm guarantees that the effects of dataset corruption on every subpopulation-of-interest are bounded by the amount of corruption to data within that subpopulation (Figure~\ref{fig:multigroup-robustness-diagram}).

More formally, we consider a binary-label learning problem where data from a domain $X$ labeled with $y \in \{0, 1\}$ is inputted into a deterministic learning algorithm $\cA: (X \times \{0, 1\})^* \rightarrow [0, 1]^X$ that outputs a predictor $p \in [0, 1]^X$. Given a set of subpopulations $\cC \subseteq 2^X$, we say that $\cA$ is multigroup robust with respect to $\cC$ if, given a dataset $S = \{(x_1, y_1), ..., (x_n, y_n)\}$ where the $x_i$s are each drawn i.i.d. from some unknown distribution $\cD_X$, given any other potentially corrupted dataset $S' = \{(x_1', y_1'), ..., (x_m', y_m')\}$, we can guarantee that the difference in the mean prediction of \emph{every} group $C \in \cC$ is bounded by the amount of change to $C$ between $S$ and $S'$, i.e. 
\[\left|\bE_{\cD_X}[\left(\cA(S)(x) - \cA(S')(x)\right)\one[x \in C]]\right| \leq \mathsf{dist}_C(S, S')\]
where $\mathsf{dist}_C(S, S')$ is a measure of the changes to the set of points belonging to $C$ between $S$ and $S'$, which we will formalize in Section~\ref{sec:multi-robust}. 

Intuitively, this definition asks that when the points from a group are unchanged or change very little in $S'$ (i.e. translating to a small $\mathsf{dist}_C(S, S')$), then the average prediction outputted by $\cA$ on that group should also not change by much, thus preserving the group's accuracy-in-expectation. %
\begin{figure}
    \centering
    \includegraphics[width=0.5\textwidth]{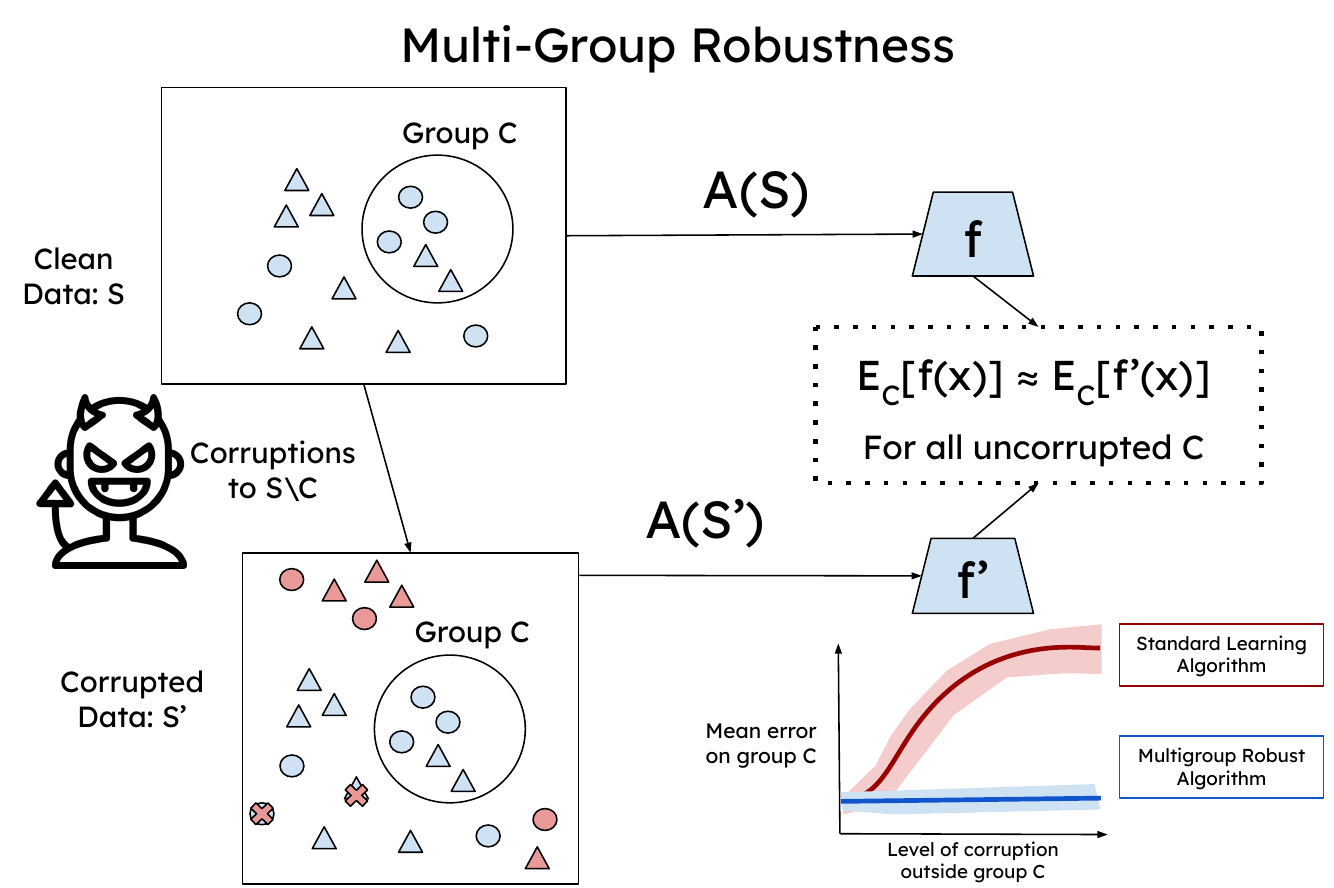}
    \caption{\small Intuitive illustration of multigroup robustness: for every group $C$, if points within the group are not modified, a multigroup robust algorithm produces a predictor that achieves marginal mean consistency with the clean data predictor (See Definition \ref{def:multigroup-robust}).}
    \label{fig:multigroup-robustness-diagram}
\end{figure}
It is easy to construct algorithms that satisfy multigroup robustness, but are not very useful as learning algorithms; for example, the algorithm that always outputs the same predictor for any dataset. Thus, we will concentrate on the particular question of whether there exist efficient learning algorithms that provide multigroup robustness as well as an agnostic learning guarantee, i.e., that the predictor outputted by $\cA$ performs at least as well as the best predictor from some benchmark set $\cP \subseteq [0, 1]^X$. 

\paragraph{Standard algorithms are not multigroup robust} Classic agnostic learning algorithms such as empirical risk minimization over the benchmark class can fail to satisfy multigroup robustness even for extremely simple families of subpopulations. Consider as an example the benchmark class containing the all ones or all zeros predictors, $\cP = \{p_0, p_1\}$, and a uniform distribution over $X$ where half of the points ($x \in X_0$) are labeled with 0s, and half of the points ($x \in X_1$) are labelled with ones. Given a sample from this half-and-half distribution, ERM will output the predictor matching the majority label in the dataset. However, because most datasets drawn will be close to half ones and half zeros, this means that only a tiny number of corrupted labels can drastically change the output from $p_0$ to $p_1$ or vice versa, impacting the predictions of a huge number of points despite not changing the overall accuracy by much. 

In Section \ref{sec:experiments}, we empirically demonstrate that several standard models for classification fail to preserve multigroup robustness under simple label-flipping and data addition attacks on the Adult Income Dataset. 

\paragraph{Connections to multiaccuracy} In light of these successful attacks, new ideas are necessary to develop algorithms that can match the performance guarantees of standard learning approaches while being multigroup robust.

When we only care about a small number of disjoint groups, a naive solution that provides multigroup robustness could be to simply train a separate model on each group's data. However, this approach becomes inefficient as we consider a growing number of possibly overlapping groups. Moreover, the individually trained models may lose out on the predictive power that could be gained from using the dataset as a whole. 

Instead, we turn to the notion of \emph{multiaccuracy}, a learning objective originating in the algorithmic fairness literature that asks for a predictor to satisfy accuracy-in-expectation simultaneously on many groups~\citep{hebert2018multicalibration, kim2019multiaccuracy}. In Section~\ref{sec:ma-and-robust} we outline how multiaccuracy's rigorous group-level guarantees can connect to our goal of multigroup robustness. However, because standard multiaccuracy algorithms assume access to i.i.d. data, we encounter challenges to using them in our setting where due to adversarial corruption, we cannot assume data is i.i.d.. We demonstrate how to bypass these obstacles by making appropriate modifications to standard multiaccuracy algorithms, and present a set of sufficient conditions for algorithms that provably achieve multigroup robustness. In Section~\ref{sec:lower-bd}, we supplement our results with a lower bound showing that multiaccuracy is a necessary property of any non-trivial algorithm that satisfies multigroup robustness.

\paragraph{Achieving multigroup robustness} With these sufficient conditions in hand, in Section~\ref{sec:implementing} we present an efficient post-processing approach that can be used to augment any existing learning algorithm to add both multigroup robustness and multiaccuracy guarantees, while preserving the performance guarantees of the original learning algorithm.

In Section~\ref{sec:experiments}, we supplement our theoretical results with experiments on real-world census datasets demonstrating that our post-processing approach can be added to existing learning algorithms to provide multigroup robustness protections without a drop in accuracy. 

\subsection{Our Contributions}
To summarize, we list our main contributions below:
\begin{itemize}
    \item Define a new notion of data-aware robustness, \emph{multigroup robustness}, that ensures subgroups do not suffer undue harm due to corruptions in unrelated data (Section~\ref{sec:multi-robust}).  
    \item Demonstrate general sufficient conditions for an algorithm to be multigroup robust by drawing connections to multiaccuracy (Section~\ref{sec:ma-and-robust}), and show that multiaccuracy is necessary for non-trivial multigroup robust algorithms (Section~\ref{sec:lower-bd}). 
    \item Present an efficient post-processing algorithm that can provide an arbitrary learning algorithm with multigroup robustness guarantees while preserving performance (Section~\ref{sec:implementing}). 
    \item Empirically validate that standard learning algorithms are vulnerable to simple attacks on multigroup robustness, and that our postprocessing method successfully protects against these attacks while preserving accuracy (Section~\ref{sec:experiments}). 
\end{itemize}

\section{Related Work}

\paragraph{Fairness and Data Poisoning}
Prior works have demonstrated that the fairness properties of machine learning models can be degraded by modifying small subsets of the training data~\citep{solans2020poisoning, van2022poisoning, chai2023robust}. \citet{jagielski2021subpopulation} show that subpopulations can be directly targeted in data poisoning attacks. Algorithms for finding fair predictors that are robust to data poisoning to any part of the dataset have also been proposed for regression~\citep{jin2023fairness}. ``Subgroup robustness" in recent literature refers to the performance of the worst~\citep{martinez2021blind, gardner2022subgroup}. In distribution shift literature, ``subgroup robustness" is used interchangeably with ``worst group robustness" \citep{sagawa2019distributionally}. In contrast, our definitions focus on how modifications to some groups in the training data can impact other unrelated groups. 

\paragraph{Multiaccuracy and Multicalibration}

Multiaccuracy and multicalibration are multigroup fairness notions that require a predictor to provide meaningful statistical guarantees (e.g., accuracy in expectation, calibration) on a large family of possibly overlapping subgroups of a population ~\citep{tradeoff, hebert2018multicalibration, gerrymander, kim2019multiaccuracy}. 
\citet{uniadapt} show that multicalibration can ensure a predictor's robustness against distribution shift, achieving \emph{universal adaptability}. They focus on covariate shift while assuming the conditional distribution of $y$ given $x$ remains the same and assume that the covariate shift can be represented by a propensity score function from a given family. Our work considers general forms of data corruption, both in covariate $x$ and label $y$, and the corrupted data need not be i.i.d.\ from any distribution.

\paragraph{Robust Statistics}
Algorithms for estimating a variety of statistical quantities on corrupted data have been studied extensively in the literature, where the quantities to estimate include mean \citep{robust-mean-1,robust-mean-2}, covariance matrix \citep{robust-mean-1,robust-gaussian}, principal components \citep{robust-pca},  and beyond.
In the typical setup of robust statistics, the corrupted data is formed by modifying or hiding a significant fraction of an otherwise i.i.d.\ dataset. The goal is to ensure that the estimation error is small relative to the fraction of corrupted data.

\section{Preliminaries and Notation}

A \emph{deterministic learning algorithm} is defined as any algorithm $\cA: (X \times \{0, 1\})^* \rightarrow \cP$ that takes a sample of data points $(x_1,y_1), ..., (x_n, y_n) \in X \times \{0, 1\}$ for any $n \in \bZ^+$ as input and outputs a predictor $p \in \cP$ where $X$ is a finite domain of points. 

Given two datasets $S = \{(x_1, y_1), ..., (x_n, y_n)\}$ and $S' = \{(x_1', y_1'), ..., (x_m', y_m')\}$, we use the notation $S \Delta_X S'$ to denote a set containing the \emph{symmetric difference} in terms of the multisets $\{x_1, ..., x_n\}$ and $\{x_1', ..., x_m'\}$. 

More formally, because these datasets could have duplicates of certain $x$-values, we cannot use a standard set definition of symmetric difference. This can be thought of as reinterpretting each $\{x_1, ..., x_n\}$ as a map $\mu$ from $X$ to $\mathbb{Z}_{\geq 0}$ where $\mu(x)$ is equal to the number of times $x$ appears in the sample, $\mu(x) = |\{i\in \{1, \ldots , n\}: x_i = x\}|$. Having similarly defined such a $\mu'$ for $\{x_1', ..., x_m'\}$, the multiset symmetric difference of these two collections is defined as the map $\mu_{\Delta}$ such that for all $x \in X$, $\mu_{\Delta}(x) = |\mu(x) - \mu'(x)|$.

Given two distributions $\cD, \cD'$ over a discrete domain $X$ and a subpopulation $C \subseteq X$, we denote the statistical distance between $\cD$ and $\cD'$ restricted to $C$ as 
\begin{align*}
    &\Delta_C(\cD, \cD') := \\
    &\sum_{x \in C }\left|\Pr_{X \sim \cD}[X = x] - \Pr_{X \sim \cD'}[X = x]\right|.
\end{align*}
\section{Multigroup Robustness}
\label{sec:multi-robust}
In this section, we define our notion of \emph{multigroup robustness}. We consider a setting where a learner is provided with a sample of binary-labelled data points $(x_1, y_1), ..., (x_n, y_n) \in \mathcal{X} \times \{0, 1\}$. %
We assume that the points could have been adversarially corrupted. 

We consider two different levels of adversarial power. The strongest adversary we consider can make label-change adjustments to this dataset by replacing a point $(x, y)$ with a new $(x, y')$, as well as add or delete new points to the dataset. Our main definition is designed to protect against these strong data-dependent adversaries, and we state it formally below:

\begin{definition}[Binary-label Multigroup Robustness]
    Let $\cC$ be a subpopulation class consisting of subsets $C\subseteq X$. For any $n \in \bZ^{+}, \varepsilon > 0, \delta \in [0, 1]$,  we say that a deterministic learning algorithm $\mathcal{A}: (X \times \{0, 1\})^* \rightarrow \cP$ is \emph{$(\cC, n, \varepsilon, \delta)$-multigroup robust} if for every distribution $\mathcal{D}_X$ over $X$, the following holds with probability at least $1 - \delta$ over $X_n = (x_1, ..., x_n)$ drawn i.i.d. from $\mathcal{D}_X$: for any $(y_1, ..., y_n) \in \{0, 1\}^n$ and $(x_1', y_1'), ..., (x_m', y_m') \in (X \times \{0, 1\})^m$, let $S$ and $S'$ denote the two samples $\{(x_i, y_i)\}_{i = 1}^n$ and $\{(x_i', y_i')\}_{i = 1}^m$, respectively.
   
    Defining $p := A(S)$ and $p' := A(S')$, we have 
    \begin{equation}
    \label{eq:robust-label}
    \begin{aligned}
    &\left|\mathbb{E}_{x\sim \mathcal{D}_X}[(p(x) - p'(x))\one(x\in C)]\right| \\
    &\le \frac 1 n \left|\sum_{i = 1}^n y_i \one[x_i \in C] - \sum_{j = 1}^m y_j' \one[x_j \in C]\right| \\
    &\,+ \frac 1 n \left|(S \Delta_X S') \cap C\right|+ \varepsilon
    \end{aligned}
    \end{equation}
    for every $C\in \mathcal{C}$.
    \label{def:multigroup-robust}
\end{definition}

We highlight that our definition makes \emph{no distributional assumptions} about the ground-truth $y$-values in the original dataset $S$, meaning that multigroup robustness implies a strong distribution-free robustness property that holds even when the original $y$-values were not i.i.d.. 

Here, the abstract $\mathsf{dist}_C(S, S')$ used in the introduction is formalized to capture label flipping performed by the adversary (the first term) as well as addition and deletion of points (the second symmetric difference term). This means that this definition can also give a definition of robustness in settings where the adversary can only flip labels.

We also consider a weaker adversary that can only change the \emph{distribution} the data is drawn from, and the particular training set is still drawn i.i.d. from that distribution:

\begin{definition}[Binary-label Multigroup Robustness to Distribution Shift]
    Let $\cC$ be a subpopulation class consisting of subsets $C\subseteq X$. For any $n \in \bZ^+$, $\varepsilon > 0$, $\delta \in [0, 1]$, we say that a deterministic learning algorithm $\mathcal{A}: (X \times \{0, 1\})^* \rightarrow \cP$ is \emph{$(\cC, n, \varepsilon, \delta)$-multigroup robust to distribution shift} if for any two distributions $\cD, \cD'$ over $X \times \{0, 1\}$, the following holds with probability at least $1 - \delta$ over $S = \left((x_1, y_1), ..., (x_n, y_n)\right)$, $S' = \left((x'_1, y_1'), ..., (x_n', y_n') \right)$ drawn i.i.d. from $\cD$ and $\cD'$:
   
    Defining $p := A(S)$ and $p' := A(S')$, we have 
    \begin{equation}
    \label{eq:robust-dist-shift}
    \begin{aligned}
    &\left|\mathbb{E}_{x\sim \mathcal{D}_X}[(p(x) - p'(x))\one(x\in C)]\right| \\
    &\le \left|\bE_{(x, y) \sim \cD}[y\one[x \in C]]- \bE_{(x', y') \sim \cD'}[y'\one[x' \in C]]\right|\\
    &+\Delta_C(\cD_X, \cD'_X) + \varepsilon
    \end{aligned}
    \end{equation}
    for every $C\in \mathcal{C}$.
\end{definition}

Note that here, the $\mathsf{dist}_C(S, S')$ bound has been replaced with a measure of the distance between the corrupted and uncorrupted distribution, however the first term still captures the amount of label shift while the second term speaks to the covariate shift in the corrupted distribution.

\section{Multigroup Robustness from Multiaccuracy}\label{sec:ma-and-robust}
In this section, we give principled conditions for algorithms that provably achieve multigroup robustness. Our proofs are largely inspired by previous algorithms for \emph{multiaccuracy}. We identify several key challenges in using those algorithms in our setting with corrupted data, and address these challenges by making appropriate modifications.

We first introduce the definition of multiaccuracy~\citep{hebert2018multicalibration}, a notion of algorithmic fairness that requires a predictor be accurate in expectation for every group in some collection of subgroups $C \subseteq 2^X$. Formally, we define multiaccuracy as follows:

\begin{definition}[Multiaccuracy \citep{hebert2018multicalibration}]
    Let $\cC$ be a subpopulation class consisting of subsets $C\subseteq X$ and $\cD$ some target distribution over $X \times \{0, 1\}$. For $\varepsilon > 0$, we say a predictor $p: X \rightarrow [0, 1]$ is $(\cC, \varepsilon)$-multiaccurate (MA) on $\cD$ if for every $C \in \cC$: 
    \begin{equation}
    \label{eq:ma}
        \left|\bE_{(x, y) \sim D}\left[(y - p(x))\one[x \in C]\right]\right| \leq \varepsilon.
    \end{equation}
\end{definition}

Throughout the paper, it will be useful to talk about algorithms that provide multiaccuracy guarantees. We formally define such an algorithm as follows:
\begin{definition}[Multiaccurate Learning Algorithm]
    Let $\cC \subseteq 2^X$ be a subpopulation class. Given $n \in  \bZ^+, \varepsilon > 0$, $\delta \in [0, 1]$, we say that a deterministic learning algorithm $\cA: (X \times \{0, 1\})^* \rightarrow [0,1]^X$ is $(\cC, n, \varepsilon, \delta)$-multiaccurate if for any distribution $\cD$ over $X \times \{0, 1\}$, given $n$ i.i.d. data points $S = (x_1, y_1), ..., (x_n, y_n)$ from $\cD$, the predictor $\cA(S)$ satisfies $(\cC, \varepsilon)$-MA on $\cD$ with probability at least $1 - \delta$. 
\end{definition}

\subsection{Standard Multiaccuracy Gives Multigroup Robustness under Distribution Shift}

We begin by considering a setting where an adversary can only corrupt the data \emph{distribution} rather than the sampled datapoints themselves, and show that in this setting, learners that output multiaccurate predictors with respect to a collection $\cC$ are also multigroup robust to distribution shift with respect to $\cC$. 

For ease of notation, for a given distribution $\cD$, predictor $p$, and subpopulation $C \subseteq X$ we denote $\mae_{\cD}(p, C) := \bE_{(x, y) \sim \cD}[(y - p(x))\one[x \in C]]$. 

\begin{lemma}[Robustness from MA]\label{lem:ma-rob}
Given two distributions $\cD, \cD'$ over $X \times \{0, 1\}$ and a collection of subsets $\cC \subseteq 2^X$, let $p$ and $p'$ be $(\cC, \varepsilon)$-MA predictors with respect to $\cD$ and $\cD'$, respectively. Then it holds that for all $C \in \cC$,
\begin{equation}\label{eq:ma-lemma}
    \begin{aligned}
        &\left|\bE_{\cD}[(p(x) - p'(x))\one[x \in C]]\right| \\&\leq
        \left| \mae_{\cD}(p', C) - \mae_{\cD'}(p', C)\right|+ 2\varepsilon.
    \end{aligned}
\end{equation}
\end{lemma}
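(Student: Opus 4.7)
The plan is to use the definition of MA-error to rewrite each of the two terms $\bE_{\cD}[p(x)\one[x \in C]]$ and $\bE_{\cD}[p'(x)\one[x \in C]]$ in terms of the common label expectation $\bE_{\cD}[y\one[x\in C]]$ and the corresponding MA-error, and then bound everything using the triangle inequality plus the two multiaccuracy hypotheses.

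Concretely, from the definition $\mae_{\cD}(q, C) = \bE_{\cD}[(y - q(x))\one[x\in C]]$ I would first observe the identity
\begin{equation*}
\bE_{\cD}[q(x)\one[x\in C]] = \bE_{\cD}[y\one[x\in C]] - \mae_{\cD}(q, C)
\end{equation*}
valid for any predictor $q$. Applying this with $q = p$ and $q = p'$ and subtracting, the $\bE_{\cD}[y\one[x\in C]]$ term cancels, giving
\begin{equation*}
\bE_{\cD}[(p(x)-p'(x))\one[x\in C]] = \mae_{\cD}(p', C) - \mae_{\cD}(p, C).
\end{equation*}

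Next, to introduce the quantity $\mae_{\cD'}(p', C)$ that appears on the right-hand side of \eqref{eq:ma-lemma}, I would add and subtract it:
\begin{equation*}
\mae_{\cD}(p', C) - \mae_{\cD}(p, C) = \bigl(\mae_{\cD}(p', C) - \mae_{\cD'}(p', C)\bigr) + \mae_{\cD'}(p', C) - \mae_{\cD}(p, C).
\end{equation*}
Taking absolute values and applying the triangle inequality, the first grouped term becomes the shift term in \eqref{eq:ma-lemma}, and the remaining two terms are controlled by the MA assumptions: $|\mae_{\cD'}(p', C)| \le \varepsilon$ since $p'$ is $(\cC,\varepsilon)$-MA on $\cD'$, and $|\mae_{\cD}(p, C)| \le \varepsilon$ since $p$ is $(\cC,\varepsilon)$-MA on $\cD$. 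Summing gives the $2\varepsilon$ slack.

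There is no real obstacle; the only "choice" is which predictor to evaluate on which distribution when splitting the cross term, and the choice above ($\mae_{\cD'}(p', C)$ as the pivot) is forced by the statement of the lemma, which writes the shift in terms of $p'$ evaluated under both $\cD$ and $\cD'$. Essentially every step is an algebraic manipulation followed by a single triangle inequality, so I would expect a short proof of two or three displayed equations.
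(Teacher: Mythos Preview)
Your proposal is correct and follows essentially the same approach as the paper: both arguments reduce $\bE_{\cD}[(p(x)-p'(x))\one[x\in C]]$ to $\mae_{\cD}(p',C)-\mae_{\cD}(p,C)$, insert $\mae_{\cD'}(p',C)$ as a pivot, and then bound the remaining terms by the two multiaccuracy hypotheses. The paper merely organizes the same algebra in the opposite order (starting from $|\mae_{\cD}(p,C)-\mae_{\cD'}(p',C)|\le 2\varepsilon$ and rearranging), but the content is identical.
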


Intuitively, the left-hand-side of equation~\ref{eq:ma-lemma} already bears resemblance to a multigroup robustness statement with respect to $p$ and $p'$. By reinterpretting the upper bound, we can show that multiaccurate learners provide multigroup robustness to distribution shift: 

\begin{lemma}\label{lem:ma-robust-dist-shift}
    Let $\cC \subseteq 2^X$ be a collection of subpopulations and let $\cA:(X \times \{0, 1\})^n \rightarrow [0,1]^X$ be a deterministic learning algorithm satisfying $(\cC, n, \epsilon, \delta)$-MA. Then, $\cA$ is also $(\cC, n, 2\varepsilon, 2\delta)$-multigroup robust to distribution shift. 
\end{lemma}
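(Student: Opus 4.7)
The plan is to combine Lemma~\ref{lem:ma-rob} with a simple decomposition of the difference $\mae_{\cD}(p', C) - \mae_{\cD'}(p', C)$ into a label-shift term and a covariate-shift term. First, invoke the $(\cC, n, \varepsilon, \delta)$-MA property twice: once on $\cD$ (the event that $p = \cA(S)$ is $(\cC,\varepsilon)$-MA on $\cD$) and once on $\cD'$ (the event that $p' = \cA(S')$ is $(\cC,\varepsilon)$-MA on $\cD'$), then apply a union bound to conclude that with probability at least $1 - 2\delta$ over the joint draw of $S, S'$, both predictors are simultaneously multiaccurate on their respective source distributions.

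On this $1-2\delta$ event, apply Lemma~\ref{lem:ma-rob} (which is stated in distributional form and applies directly here since it only uses the MA properties of $p, p'$) to obtain
\[
\left|\bE_{\cD}[(p(x) - p'(x))\one[x \in C]]\right| \le \left|\mae_{\cD}(p', C) - \mae_{\cD'}(p', C)\right| + 2\varepsilon
\]
for every $C \in \cC$. Note the left-hand side equals $|\bE_{\cD_X}[(p(x)-p'(x))\one[x\in C]]|$ because both $p$ and $p'$ depend on $x$ only, so the $y$-marginal can be integrated out.

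Next, expand the MA-error gap by linearity:
\[
\mae_{\cD}(p', C) - \mae_{\cD'}(p', C) = \bigl(\bE_{\cD}[y\one[x\in C]] - \bE_{\cD'}[y\one[x\in C]]\bigr) - \bigl(\bE_{\cD_X}[p'(x)\one[x\in C]] - \bE_{\cD'_X}[p'(x)\one[x\in C]]\bigr),
\]
and use the triangle inequality to separate this into the label-shift term (which matches the first summand on the RHS of \eqref{eq:robust-dist-shift}) plus a ``covariate-shift on $p'$'' term. The latter is bounded by $\Delta_C(\cD_X, \cD'_X)$ using that $p'(x) \in [0,1]$:
\[
\left|\sum_{x \in C} p'(x)\bigl(\Pr_{\cD_X}[X=x] - \Pr_{\cD'_X}[X=x]\bigr)\right| \le \sum_{x \in C}\bigl|\Pr_{\cD_X}[X=x] - \Pr_{\cD'_X}[X=x]\bigr| = \Delta_C(\cD_X, \cD'_X).
\]
Chaining these inequalities recovers \eqref{eq:robust-dist-shift} with slack $2\varepsilon$ and failure probability $2\delta$, completing the proof.

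There is no real obstacle here; the only step requiring a moment of care is recognizing that the $\mae$ gap cleanly splits into exactly the two distance terms appearing in the distribution-shift definition, and that the $[0,1]$-boundedness of $p'$ is what allows the covariate-shift term to be controlled by the statistical-distance quantity $\Delta_C(\cD_X, \cD'_X)$ rather than a weaker norm.
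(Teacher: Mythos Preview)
Your proof is correct and follows essentially the same approach as the paper: apply Lemma~\ref{lem:ma-rob}, decompose $\mae_{\cD}(p',C)-\mae_{\cD'}(p',C)$ into a label-shift term and a covariate-shift term, bound the latter by $\Delta_C(\cD_X,\cD'_X)$ using $p'(x)\in[0,1]$, and union-bound the two MA failure events to get the $2\delta$. The only cosmetic difference is that you invoke the union bound first and the paper invokes it last.
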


\subsection{Leveraging Uniform Convergence for Stronger Robustness}

In total, we have shown so far that learning algorithms for multiaccuracy such as those of~\citep{hebert2018multicalibration, kim2019multiaccuracy} \emph{additionally} give out-of-the-box multigroup robustness guarantees against weak adversaries that can corrupt the data distribution. 
This result relies on the key assumption that while the corrupted distribution may be arbitrarily warped compared to the original distribution, we can still assume that the training data is drawn i.i.d. from the corrupted distribution. This is a key property that allows us to apply multiaccuracy algorithms that assume access to an i.i.d. datasource. 

Moving beyond distributional shifts, we are also interested in stronger adversaries that can directly corrupt a data sample either through label change or addition/deletion of points. In the presence of these stronger adversaries, we can no longer assume access to an ``clean'' i.i.d. datasource. 
In the absence of i.i.d. data, we can still work with the empirical distribution: the uniform distribution over the data points. However, because multigroup robustness is a statement about the predictor similarity over the \emph{true marginal distribution $D_X$}, it's not clear whether relying on the corrupted empirical distribution alone can give us these distributional guarantees necessary for robustness. 

Despite this obstacle, our main result shows that by leveraging an additional uniform-convergence assumption, we can in fact achieve multigroup robustness against strong dataset-dependent adversaries for any algorithm that guarantees multiaccuracy on the empirical distribution. 
Our analysis for handling adversarially corrupted data differs from typical analyses where the uniform convergence assumption is applied to a learning algorithm with i.i.d.\ input data (see \Cref{rem:uniform-cvg}).

We begin by showing in Lemma~\ref{lem:ma-ptwise-rob} that guaranteeing empirical multiaccuracy already yields a guarantee of predictor-closeness when measured over the empirical distribution of the uncorrupted dataset. We formally define an empirically robust learning algorithm as follows:

\begin{definition}[Empirically Multiaccurate Learning Algorithm]\label{def:emp-ma}
    Let $\cC$ be a subpopulation class consisting of subsets $C \subseteq X$. Given $\varepsilon > 0$, we say that a deterministic learning algorithm $\cA: (X \times \{0, 1\})^* \rightarrow [0,1]^X$ is $(\cC, \varepsilon)$-empirically-multiaccurate if when given as input data points $S = (x_1, y_1), ..., (x_n, y_n)$ from $X \times \{0, 1\}$ for any $n \in \bZ^+$, the predictor $p:= \cA(S)$ satisfies 
    \[\left|\frac{1}{n}\sum_{i = 1}^n (y_i - p(x_i))\one[x_i \in C]\right| \leq \varepsilon\]
    for all $C \in \cC$. 
\end{definition}

\begin{lemma}[Pointwise Robustness from Empirical MA]\label{lem:ma-ptwise-rob}
    Given two datasets $S = \{(x_i, y_i)\}_{i = 1}^n$, $S' = \{(x_j', y_j')\}_{j = 1}^m$ from $X \times \{0, 1\}$ and a collection of subsets $\cC$, let $p$ and $p'$ be $(\cC, \varepsilon)$-MA and $(\cC, \varepsilon')$-MA predictors with respect to $\mathsf{Uni}(S)$ and $\mathsf{Uni}(S')$, respectively. Then it holds that for all $C \in \cC$, 
    \begin{equation}\label{eq:pointwise-rob-lem}
    \begin{aligned}
        &\left|\bE_{(x, y) \sim \mathsf{Uni}(S)}[(p'(x) - p(x))\one[x \in C]]\right| \\
        &\leq
        \frac{1}{n}\left|\sum_{(x, y) \in S}y \one[x \in C] - \sum_{(x', y') \in S'}y'\one[x' \in C]\right|\\
        &\quad + \frac{1}{n}|(S \Delta_X S')\cap C| + \varepsilon + \frac{m}{n}\varepsilon'.
    \end{aligned}
    \end{equation}
\end{lemma}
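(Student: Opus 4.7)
The plan is to mimic the proof of \Cref{lem:ma-rob}, but with two important modifications to handle the empirical (rather than distributional) setting. First, because $p$ is only guaranteed to be multiaccurate on $\mathsf{Uni}(S)$ and $p'$ only on $\mathsf{Uni}(S')$, I need to bridge the gap between an expectation against $\mathsf{Uni}(S)$ involving $p'$ and the multiaccuracy guarantee of $p'$ which is stated against $\mathsf{Uni}(S')$. Second, I need to account for the fact that $|S|=n$ and $|S'|=m$ may differ, which is where the $m/n$ factor in front of $\varepsilon'$ will come from.

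Concretely, I would start by applying empirical MA of $p$ on $\mathsf{Uni}(S)$ to obtain
\begin{equation*}
\left|\frac{1}{n}\sum_{i=1}^n p(x_i)\one[x_i\in C] - \frac{1}{n}\sum_{i=1}^n y_i\one[x_i\in C]\right| \le \varepsilon.
\end{equation*}
Next, I would rewrite the $p'$-term using the multiset representation from the preliminaries: letting $\mu,\mu':X\to\bZ_{\ge 0}$ encode the $x$-multiplicities of $S$ and $S'$,
\begin{equation*}
\sum_{i=1}^n p'(x_i)\one[x_i\in C] - \sum_{j=1}^m p'(x_j')\one[x_j'\in C] = \sum_{x\in C}(\mu(x)-\mu'(x))p'(x).
\end{equation*}
Since $p'(x)\in[0,1]$, the right-hand side is bounded in absolute value by $\sum_{x\in C}|\mu(x)-\mu'(x)| = |(S\Delta_X S')\cap C|$. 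This is the step that transports the $p'$-expectation from $\mathsf{Uni}(S)$ to (an unnormalized version of) $\mathsf{Uni}(S')$, paying only for the symmetric difference inside $C$.

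Then I would invoke empirical MA of $p'$ on $\mathsf{Uni}(S')$, which gives
\begin{equation*}
\left|\frac{1}{m}\sum_{j=1}^m p'(x_j')\one[x_j'\in C] - \frac{1}{m}\sum_{j=1}^m y_j'\one[x_j'\in C]\right| \le \varepsilon'.
\end{equation*}
Multiplying this by $m/n$ to match normalization with the $\mathsf{Uni}(S)$-expectation is exactly what produces the $\tfrac{m}{n}\varepsilon'$ term in the bound. Finally, I chain these three inequalities by a triangle inequality: replace $p(x_i)$ by $y_i$ (costing $\varepsilon$), replace $\sum_i p'(x_i)\one[x_i\in C]$ by $\sum_j p'(x_j')\one[x_j'\in C]$ (costing $|(S\Delta_X S')\cap C|$ after dividing by $n$), and then replace $p'(x_j')$ by $y_j'$ (costing $\tfrac{m}{n}\varepsilon'$). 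Collecting the remaining $y$- and $y'$-sums yields the claimed right-hand side.

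The proof is almost entirely bookkeeping, and I don't expect a real obstacle, but the step that requires the most care is the transport between $S$ and $S'$: one has to resist the temptation to write an $\bE_{\mathsf{Uni}(S')}$ expectation directly, because the $1/m$ normalization would mismatch the $1/n$ in the target inequality. Working at the level of \emph{unnormalized} sums over the multiset representation makes the symmetric-difference bound and the $m/n$ scaling fall out cleanly, which is the whole reason the conclusion has the asymmetric-looking $\varepsilon+\tfrac{m}{n}\varepsilon'$ form rather than $\varepsilon+\varepsilon'$.
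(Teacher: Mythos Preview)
Your proposal is correct and follows essentially the same route as the paper's proof: apply MA of $p$ on $\mathsf{Uni}(S)$ to swap $p$ for $y$ at cost $\varepsilon$, transport the $p'$-sum from $S$ to $S'$ via the multiset identity $\sum_{x\in C}(\mu(x)-\mu'(x))p'(x)$ bounded by $|(S\Delta_X S')\cap C|$, and apply MA of $p'$ on $\mathsf{Uni}(S')$ rescaled by $m/n$ to swap $p'$ for $y'$ at cost $\tfrac{m}{n}\varepsilon'$. The paper organizes the triangle inequalities in a slightly different order (it introduces $\tfrac{m}{n}\bE_{S'}$ explicitly before splitting into $p'$- and $y$-terms), but the ingredients and the resulting bound are identical.
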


While the right side of the statement of Lemma~\ref{eq:pointwise-rob-lem} matches the definition of multigroup robustness, the left side is still an expectation over the empirical distribution $\mathsf{Uni}(S)$ rather than the uncorrupted target distribution $\cD_X$. This means that to show that multigroup robustness holds, it suffices to show that the empirical quantity $\left|\bE_{(x, y) \sim \mathsf{Uni}(S)}[(p'(x) - p(x))\one[x \in C]]\right|$ is close to its population limit, $\left|\bE_{x \sim D_x}[(p'(x) - p(x))\one[x \in C]]\right|$, for all $C \in \cC$. 

Our main result uses this reasoning to show that a learning algorithm that outputs a predictor multiaccurate with respect to the empirical distribution satisfies multigroup robustness whenever we can guarantee \emph{uniform convergence} over all possible outputted predictors and subpopulations (See Definition~\ref{def:uniform-cvg} for a formal definition). 
We additionally show that under this assumption we can guarantee the outputted predictor is multiaccurate with respect to the target distribution when the learning algorithm is given uncorrupted i.i.d. data (Theorem~\ref{thm:general-robustness-and-ma}). 

\begin{definition}\label{def:uniform-cvg}
    Let $\cC \subseteq 2^X$ be a class of subpopulations, and let $\cP \subseteq [0, 1]^X$ be a family of predictors. We say that $\cP$ satisfies $(\cC, n, \varepsilon, \delta)$-\emph{uniform convergence} if for any distribution $\cD$ over $X \times \{0, 1\}$, we are guaranteed that with probability at least $1 - \delta$ over the randomness of datapoints $(x_1, y_1), ..., (x_n, y_n)$ drawn i.i.d. from $\cD$, we are guaranteed that the following inequalities hold for all $p \in \cP$ and $C \in \cC$:
    \begin{align}
    \left|\frac 1n \sum_{i=1}^n p(x_i)\one(x_i\in C) - \bE_\cD [p(x)\one(x\in C)]\right| & \le \varepsilon,\\
    \left|\frac 1n \sum_{i=1}^n y_i\one(x_i\in C) - \bE_\cD [y\one(x\in C)]\right| & \le \varepsilon.
    \end{align}
\end{definition}

\begin{theorem}\label{thm:general-robustness-and-ma}
Let $\cA: (X \times \{0, 1\})^* \rightarrow \cP$ be a deterministic learning algorithm that outputs predictors from the family $\cP \subseteq [0,1]^X$. For a family of subpopulations $\cC \subseteq 2^X$, suppose that $\cA$ satisfies empirical $(\cC, \varepsilon_1)$-multiaccuracy
and additionally suppose that $\cP$ satisfies $(\cC, n, \varepsilon_2, \delta_2)$-uniform convergence. Then, $\cA$ is $(\cC, n, \left(1 + \frac{m}{n}\right)\varepsilon_1 + 2\varepsilon_2, \delta_2)$-multigroup robust as well as $(\cC, n, \varepsilon_1 + 2\varepsilon_2, \delta_2)$-multiaccurate.
\end{theorem}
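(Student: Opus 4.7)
The plan is to combine the empirical-distribution robustness statement of Lemma~\ref{lem:ma-ptwise-rob} with the uniform-convergence hypothesis over $\cP$, in order to lift the bound from the empirical distribution $\mathsf{Uni}(S)$ up to the true marginal $\cD_X$. Fix any distribution $\cD_X$ over $X$ and draw $x_1,\dots,x_n$ i.i.d.\ from $\cD_X$. Define the event $\mathcal{E}$ to be that the first uniform-convergence inequality of Definition~\ref{def:uniform-cvg} holds simultaneously for every $p\in \cP$ and every $C\in \cC$. Since this inequality depends only on the $x_i$'s, I can invoke the hypothesis with respect to an arbitrary extension of $\cD_X$ to a joint distribution on $X \times \{0,1\}$ (e.g.\ pair each $x$ with a fixed $y$) and conclude $\Pr[\mathcal{E}] \ge 1 - \delta_2$.

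Conditioned on $\mathcal{E}$, fix arbitrary $(y_1,\dots,y_n)\in\{0,1\}^n$ and any corrupted sample $S'$, and let $p := \cA(S)$ and $p' := \cA(S')$, both of which lie in $\cP$. Applying the uniform-convergence inequality to each of $p$ and $p'$ separately and using the triangle inequality gives, for every $C\in\cC$,
\[\left|\bE_{\mathsf{Uni}(S)}[(p'(x)-p(x))\one(x\in C)] - \bE_{\cD_X}[(p'(x)-p(x))\one(x\in C)]\right| \le 2\varepsilon_2.\]
Empirical $(\cC,\varepsilon_1)$-multiaccuracy of $\cA$ ensures that $p$ is $(\cC,\varepsilon_1)$-MA on $\mathsf{Uni}(S)$ and $p'$ is $(\cC,\varepsilon_1)$-MA on $\mathsf{Uni}(S')$, so Lemma~\ref{lem:ma-ptwise-rob} bounds the empirical-distribution quantity on the left by the label-discrepancy term, the symmetric-difference term, and an additive slack of $\varepsilon_1 + (m/n)\varepsilon_1 = (1+m/n)\varepsilon_1$. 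A final triangle inequality then yields multigroup robustness with total additive slack $(1+m/n)\varepsilon_1 + 2\varepsilon_2$.

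For the multiaccuracy guarantee, suppose instead that $S$ is drawn i.i.d.\ from some $\cD$ on $X\times\{0,1\}$. Using both uniform-convergence inequalities of Definition~\ref{def:uniform-cvg}, which jointly hold with probability $\ge 1-\delta_2$, empirical $(\cC,\varepsilon_1)$-multiaccuracy gives $\bigl|\tfrac{1}{n}\sum_i (y_i - p(x_i))\one(x_i\in C)\bigr| \le \varepsilon_1$ for $p := \cA(S)$, while the $y$- and $p(x)$-uniform-convergence inequalities together contribute an additional $2\varepsilon_2$ when passing from $\mathsf{Uni}(S)$ to $\cD$; a triangle inequality delivers $\bigl|\bE_{\cD}[(y-p(x))\one(x\in C)]\bigr| \le \varepsilon_1 + 2\varepsilon_2$. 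The only real subtlety is the decoupling step above: since the $y_i$'s are adversarial in the robustness setting, I must rely solely on the $x$-dependent uniform-convergence inequality, whereas the $y$-dependent one is only usable in the i.i.d.\ multiaccuracy analysis. This cleanly matches the remark about uniform convergence flagged in the paper.
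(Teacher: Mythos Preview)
Your proposal is correct and follows essentially the same route as the paper: split into a robustness lemma and a multiaccuracy lemma, use Lemma~\ref{lem:ma-ptwise-rob} to control the empirical-distribution gap, and then apply the uniform-convergence hypothesis (to $p$ and $p'$ separately for robustness, and to $p$ and $y$ for multiaccuracy) to pass to the population. Your explicit treatment of the event $\mathcal{E}$ and the observation that only the $x$-dependent uniform-convergence inequality is needed in the robustness half (since the $y_i$'s are adversarial there) is exactly the content of Remark~\ref{rem:uniform-cvg}, and is handled the same way in the paper's proof.
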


\begin{remark}[Strong use of uniform convergence]\label{rem:uniform-cvg}
Uniform convergence is a standard technique for establishing the generalization of a learning algorithm and is typically applied to algorithms given uncorrupted, i.i.d. training data. The important difference in our approach is that we apply uniform convergence to algorithms whose input data is adversarially corrupted and cannot be treated as i.i.d.\ from any distribution. In our setting, i.i.d.\ data is first given to an \emph{unrestricted} adversary to produce corrupted data, which is then given to the learning algorithm whose output model is \emph{restricted} to a class. Since we do not restrict the behavior of the adversary, our analysis makes a stronger use of uniform convergence  than typical analyses. 

\end{remark}
\subsection{Lower Bounds}\label{sec:lower-bd}
We now explore whether multiaccuracy is a necessary condition of a multigroup robust algorithm under a weak non-triviality assumption (see details in Appendix~\ref{sec:lower-bd-app}).

\begin{theorem}[Lower Bound]\label{thm:lower-bound}
Let $\cC$ be a class of subpopulations, $\cP \subseteq [0, 1]^X$ a family of predictors containing the all ones predictor $p(x) = 1$ for all $x \in X$, and $\cA: (X \times \{0, 1\}^*) \rightarrow \cP$ a deterministic learning algorithm. If $\cA$ is $(\cC, n, \varepsilon_1, \delta_1)$-multigroup robust and $(n, \varepsilon_2, \delta_2)$-accurate-in-expectation, and $\cP$ satisfies $(\cC, n, \epsilon_3, \delta_3)$-uniform convergence, then $\cA$ is a $(\cC, n,\epsilon_1 + \epsilon_2 + 2 \epsilon_3, 2\delta_1 + 2\delta_2 + \delta_3)$-multiaccurate learning algorithm. 
\end{theorem}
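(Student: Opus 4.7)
Fix any distribution $\cD$ on $X \times \{0,1\}$, let $S = \{(x_i,y_i)\}_{i=1}^n \sim \cD^n$, and write $p := \cA(S)$. The idea is to sandwich $p$ between two reference predictors whose means are pinned down by accuracy-in-expectation, and then transport that sandwich from empirical to population expectations via uniform convergence. Concretely, I use the relabelings $S_0 := \{(x_i,0)\}_{i=1}^n$ and $S_1 := \{(x_i,1)\}_{i=1}^n$ and set $p_b := \cA(S_b)$ for $b \in \{0,1\}$. Each $S_b$ is i.i.d.\ from the (legal) distribution with $x$-marginal $\cD_X$ and deterministic label $b$, so the accuracy-in-expectation hypothesis applies to each: with probability at least $1-2\delta_2$, $\bE_{\cD_X}[p_0(x)] \le \varepsilon_2$ and $\bE_{\cD_X}[p_1(x)] \ge 1-\varepsilon_2$. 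Since $p_b \in [0,1]$, these restrict to every $C\in\cC$ as
\[
\bE_{\cD_X}[p_0(x)\one[x\in C]] \le \varepsilon_2 \quad \text{and} \quad \bE_{\cD_X}[p_1(x)\one[x\in C]] \ge \Pr_{\cD_X}[x\in C]-\varepsilon_2.
\]

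Second, I apply $(\cC,n,\varepsilon_1,\delta_1)$-multigroup robustness to the pairs $(S,S_0)$ and $(S,S_1)$. Each pair shares the same $x$-multiset, so the symmetric-difference term in~\Cref{eq:robust-label} vanishes, leaving, with probability at least $1-2\delta_1$ and simultaneously for every $C\in\cC$,
\[
\bigl|\bE_{\cD_X}[(p(x)-p_0(x))\one[x\in C]]\bigr| \le \frac{1}{n}\sum_i y_i\one[x_i\in C] + \varepsilon_1
\]
and
\[
\bigl|\bE_{\cD_X}[(p(x)-p_1(x))\one[x\in C]]\bigr| \le \frac{1}{n}\sum_i(1-y_i)\one[x_i\in C] + \varepsilon_1.
\]

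Third, $(\cC,n,\varepsilon_3,\delta_3)$-uniform convergence turns the empirical averages into population ones: with probability at least $1-\delta_3$, the label part gives $\tfrac{1}{n}\sum_i y_i\one[x_i\in C]\le\bE_\cD[y\one[x\in C]]+\varepsilon_3$ for all $C$, and applying the predictor part to the all-ones predictor (which lies in $\cP$ by hypothesis) gives $\tfrac{1}{n}\sum_i\one[x_i\in C]\le\Pr_{\cD_X}[x\in C]+\varepsilon_3$. Chaining the $(S,S_0)$ inequality with the upper bound on $\bE_{\cD_X}[p_0(x)\one[x\in C]]$ yields the upper direction $\bE_{\cD_X}[p(x)\one[x\in C]]-\bE_\cD[y\one[x\in C]] \le \varepsilon_1+\varepsilon_2+\varepsilon_3$; chaining the $(S,S_1)$ inequality with the lower bound on $\bE_{\cD_X}[p_1(x)\one[x\in C]]$, and using both uniform-convergence estimates to control $\tfrac{1}{n}\sum_i(1-y_i)\one[x_i\in C]=\tfrac{1}{n}\sum_i\one[x_i\in C]-\tfrac{1}{n}\sum_i y_i\one[x_i\in C]$, yields the lower direction $\bE_{\cD_X}[p(x)\one[x\in C]]-\bE_\cD[y\one[x\in C]] \ge -(\varepsilon_1+\varepsilon_2+2\varepsilon_3)$. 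Taking absolute values and union-bounding the five failure events proves the claimed $(\cC,n,\varepsilon_1+\varepsilon_2+2\varepsilon_3,2\delta_1+2\delta_2+\delta_3)$-multiaccuracy.

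\textbf{Main obstacle.} The key conceptual step is turning multigroup robustness, a two-sample statement, into a one-sample multiaccuracy guarantee by picking reference datasets $S_0,S_1$ that simultaneously (i) share the $x$-multiset of $S$ so the symmetric-difference term vanishes, and (ii) are i.i.d.\ from valid distributions so that accuracy-in-expectation pins down $\bE_{\cD_X}[p_b(x)]$. The asymmetric factor of $2$ on $\varepsilon_3$ in the lower direction is precisely where the all-ones $\in \cP$ hypothesis is consumed, via a second call to uniform convergence to estimate $\Pr_{\cD_X}[x\in C]$; everything else is routine bookkeeping.
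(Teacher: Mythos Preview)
Your proposal is correct and follows essentially the same approach as the paper: both introduce the relabeled datasets $S_0,S_1$, use accuracy-in-expectation to pin down $\bE_{\cD_X}[p_b(x)]$, invoke multigroup robustness on the pairs $(S,S_0)$ and $(S,S_1)$ (where the symmetric-difference term vanishes), and then use uniform convergence---including the all-ones predictor to handle $\Pr_{\cD_X}[x\in C]$---to pass from empirical to population quantities, obtaining the asymmetric $\varepsilon_1+\varepsilon_2+\varepsilon_3$ and $\varepsilon_1+\varepsilon_2+2\varepsilon_3$ bounds in the two directions.
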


\section{Implementing Multigroup Robustness}\label{sec:implementing}

So far, we have demonstrated sufficient conditions for a learning algorithm to satisfy multigroup robustness (empirical multiaccuracy and uniform convergence). We now show that multigroup robustness can be achieved efficiently in parallel with standard accuracy objectives. In particular, we present a post-processing procedure that can convert any black-box learning algorithm into a multigroup robust learning algorithm that minimizes $\ell_2$ error.

\subsection{Post-processing Approach}

We consider a setting where we are given access to an arbitrary deterministic learning algorithm $\cA: (X \times \{0, 1\})^* \rightarrow [0, 1]^X$. We will demonstrate how to post-process this algorithm to produce a new algorithm $\mathsf{PP}_{\cA}: (X \times\{0, 1\})^* \rightarrow [0, 1]^X$ (Algorithm~\ref{alg:ma-empirical}) that provides comparable performance to that of $\cA$ in terms of $\ell_2$ error while also satisfying multigroup robustness. We present our post-processing approach in Algorithm~\ref{alg:ma-empirical}.

\begin{algorithm}
    \caption{Multiaccuracy Boost on Empirical Distribution}
    \label{alg:ma-empirical}
    \begin{algorithmic}
       \STATE {\bfseries Parameters: }$n \in \mathbb{Z}_{\ge 0}$, $\varepsilon \in \mathbb{R}_{\geq 0}$, $\mathcal{C} \subseteq \{0, 1\}^X$
       \STATE {\bfseries Input: }data points $(x_1, y_1), ..., (x_n, y_n) \in X \times \{0, 1\}$, learning algorithm $\cA:\{0, 1\}^n \rightarrow [0, 1]^X$
       \STATE {\bfseries Output: }predictor $p:X\rightarrow[0,1]$

       \STATE {\bfseries Step 1: Training}
       \STATE  initialize $p \leftarrow \cA((x_1, y_1), ..., (x_n, y_n))$

       \STATE {\bfseries Step 2: Post-processing}
       \WHILE{$\exists C \in \mathcal{C}$ s.t. $|\frac{1}{n}\sum_{i = 1}^n(p(x_i) - y_i)\one(x_i \in C)| > \varepsilon$}
            \STATE $v_C := \mathsf{sgn}\left(\sum_{i = 1}^n (p(x_i) - y_i)\one(x_i \in C)\right)$
            \FORALL{$x \in C$}
                \STATE $p(x) \leftarrow p(x) - v_C\epsilon$
                \STATE $p(x) \leftarrow \max \{0, \min\{p(x), 1\}\}$
            \ENDFOR
        \ENDWHILE

        \STATE {\bfseries Return: }$p$
    \end{algorithmic}
\end{algorithm}
Algorithm~\ref{alg:ma-empirical} uses an iterative auditing approach to bring the predictor closer to being multiaccurate with each iteration of the While loop in Step 2. This is similar to standard algorithms for multiaccuracy presented in the 
literature~\citep{hebert2018multicalibration, kim2019multiaccuracy}, but is differentiated in that it audits using \emph{the entire dataset} at each iteration, rather than sampling fresh data for each update step. This alteration is necessary in order to guarantee we achieve empirical
multiaccuracy.

It follows immediately from the definition of Algorithm~\ref{alg:ma-empirical} that it outputs an empirically multiaccurate predictor (See Lemma~\ref{lem:stopping-condition}). We can also show that the algorithm is guaranteed to terminate in a bounded number of steps, and thus the class of predictors it can output is also bounded, giving us a uniform convergence result that we state formally and prove in Lemma~\ref{lem:uniform-convergence}.

Lemmas~\ref{lem:stopping-condition} and \ref{lem:uniform-convergence} give us the two sufficient conditions for multigroup robustness described in Section~\ref{sec:multi-robust} (empirical multiaccuracy and uniform convergence, respectively). Thus, we can show that $\mathsf{PP}_{\cA}$ satisfies multigroup-robustness and multiaccuracy:

\begin{theorem}
    Let $\cA: (X \times \{0, 1\})^* \rightarrow \cP$ be a deterministic learning algorithm that is guaranteed to output from a finite set of predictors $\cP \subseteq [0, 1]^X$. Let $\mathsf{PP}_{\cA}$ be the algorithm defined by Algorithm~\ref{alg:ma-empirical} on input $\cA$ with input parameter $\epsilon>0$. Then, for any $\delta \in [0, 1]$, $\mathsf{PP}_{\cA}$ satisfies $(\cC, n, (3 + \frac{m}{n})\epsilon, \delta)$-multigroup robustness and $(\cC, n, 3\epsilon, \delta)$-multiaccuracy for any $n \geq \frac{\log(|\cP|(2|\cC|)^{1/\epsilon^2 + 1}/\delta)}{2\epsilon^2}$.
\end{theorem}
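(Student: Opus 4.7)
My plan is to invoke Theorem~\ref{thm:general-robustness-and-ma}, so the task reduces to verifying its two hypotheses for $\mathsf{PP}_{\cA}$: (i) empirical $(\cC,\varepsilon_1)$-multiaccuracy with $\varepsilon_1 = \epsilon$, and (ii) $(\cC, n, \varepsilon_2, \delta)$-uniform convergence over the family of possible outputs, with $\varepsilon_2 = \epsilon$. Substituting $\varepsilon_1 = \varepsilon_2 = \epsilon$ into the conclusion of Theorem~\ref{thm:general-robustness-and-ma} gives multigroup robustness with error $(1 + m/n)\epsilon + 2\epsilon = (3 + m/n)\epsilon$ and multiaccuracy with error $\epsilon + 2\epsilon = 3\epsilon$, matching the theorem.

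Hypothesis (i) is immediate from the exit condition of the while loop of Algorithm~\ref{alg:ma-empirical} (Lemma~\ref{lem:stopping-condition}): the loop only terminates once every $C \in \cC$ has empirical multiaccuracy error at most $\epsilon$ on the input sample, which is exactly the $(\cC,\epsilon)$-empirical multiaccuracy requirement of Definition~\ref{def:emp-ma}.

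The substantive work lies in hypothesis (ii): $\mathsf{PP}_{\cA}$ does not output from $\cP$ but from a strictly larger post-processed family $\cP'$, and I need to bound $|\cP'|$ in order to apply a Hoeffding-plus-union-bound argument. The key step is a potential-function bound on the number of iterations. Define $\Phi_t = \tfrac{1}{n}\sum_{i=1}^n (p_t(x_i) - y_i)^2$. Expanding the square and using the facts that $v_C$ has the same sign as $\sum_{x_i \in C}(p_t(x_i)-y_i)$ and that the while-loop guard forces $|\tfrac{1}{n}\sum_{x_i \in C}(p_t(x_i)-y_i)| > \epsilon$, a short calculation shows $\Phi_{t+1} - \Phi_t \le -2\epsilon \cdot \epsilon + \tfrac{|C \cap S|}{n}\epsilon^2 \le -\epsilon^2$ before clipping; clipping $p$ into $[0,1]$ can only move $p(x_i)$ closer to $y_i \in \{0,1\}$, so $\Phi$ decreases at least as much after clipping. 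Since $\Phi_0 \le 1$, the algorithm halts within $1/\epsilon^2$ iterations. Each iteration is parameterized by a choice of group $C \in \cC$ and sign $v_C \in \{-1,+1\}$, so the set of possible trajectories, and hence $|\cP'|$, is bounded by $|\cP| \cdot \sum_{t=0}^{1/\epsilon^2} (2|\cC|)^t \le |\cP|(2|\cC|)^{1/\epsilon^2 + 1}$.

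With $|\cP'|$ in hand, I would apply Hoeffding's inequality to the two bounded random variables $p(x)\one(x \in C)$ and $y\one(x \in C)$ and union bound over $p \in \cP'$ and $C \in \cC$ to conclude that $\cP'$ satisfies $(\cC, n, \epsilon, \delta)$-uniform convergence whenever $n \ge \log(|\cP|(2|\cC|)^{1/\epsilon^2 + 1}/\delta)/(2\epsilon^2)$, absorbing the factor-of-two union-bound constants into the exponent of $(2|\cC|)$. Combining with Theorem~\ref{thm:general-robustness-and-ma} finishes the proof. The main obstacle is the potential-function analysis — in particular, verifying that the clipping step does not disrupt the $-\epsilon^2$ decrement and that the iteration bound $1/\epsilon^2$ holds uniformly across all possible input datasets, since that bound is what keeps $\log |\cP'|$ small enough for the stated sample complexity.
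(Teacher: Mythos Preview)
Your proposal is correct and follows essentially the same approach as the paper: the paper's proof is the one-liner ``combine Theorem~\ref{thm:general-robustness-and-ma} with Lemmas~\ref{lem:stopping-condition} and~\ref{lem:uniform-convergence},'' and you have accurately reconstructed the content of those two lemmas (stopping condition gives empirical multiaccuracy; the $\ell_2$-potential argument bounds iterations by $1/\epsilon^2$, hence $|\cP'|\le |\cP|(2|\cC|)^{1/\epsilon^2+1}$, hence uniform convergence via Hoeffding plus union bound). Your treatment of the clipping step and the substitution $\varepsilon_1=\varepsilon_2=\epsilon$ into Theorem~\ref{thm:general-robustness-and-ma} match the paper exactly.
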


\begin{proof}
    The theorem follows immediately by combining Theorem~\ref{thm:general-robustness-and-ma} with Lemmas~\ref{lem:stopping-condition} and \ref{lem:uniform-convergence}.
\end{proof}

\subsection{Loss Minimization Guarantee}

Lastly, we show that the post-processed predictor is not much worse than the initial predictor outputted by the original learning algorithm. Intuitively, this result follows from Lemma~\ref{lem:l2-dec}, which tells us that each iteration of the post-processing step decreases the predictor's empirical $\ell_2$-loss by at least $\epsilon^2$. With this fact in hand, we can appeal to uniform convergence to show that the loss decrease also generalizes to the entire distribution. The following corollary is a general statement that holds for any run of the algorithm and follows from a stronger version of this result included in the appendix (Theorem~\ref{thm:loss-min}). 

\begin{corollary}[Corollary to Theorem~\ref{thm:loss-min}]
    Let $\cA:(X \times\{0, 1\}^*) \rightarrow \cP$ be a deterministic learning algorithm that is guaranteed to output from a finite set of predictors $\cP \subseteq [0, 1]^X$. Let $\mathsf{PP}_{\cA}$ be the algorithm defined by Algorithm~\ref{alg:ma-empirical} on input $\cA$ with parameter $\epsilon > 0$. Given a sample $S = \{(x_1, y_1), ..., (x_n, y_n)\}$ drawn i.i.d. from some distribution $\cD$ over $X \times \{0, 1\}$, let $p := \cA(S)$ be the predictor output by $\cA$ on $S$, and let $p_{\mathsf{PP}} := \mathsf{PP}_{\cA}$ be the predictor output after post-processing. Then, for any $\delta \in [0, 1]$ and $n \geq \frac{\log(|\cP|(2|\cC|)^{1/\epsilon^2})}{2\epsilon^2}$, we are guaranteed that with probability at least $1 - \delta$
    \[\bE_{\cD}[(y - p_{\mathsf{PP}}(x))^2] \leq \bE_{\cD}[(y - p(x))^2] + 2\epsilon.\]
\end{corollary}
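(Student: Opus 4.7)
The plan is to combine the per-iteration loss decrease guaranteed by Lemma~\ref{lem:l2-dec} with a uniform convergence bound on the (finite) class of predictors that $\mathsf{PP}_{\cA}$ can possibly output, so that the empirical loss advantage of $p_\mathsf{PP}$ over $p$ lifts to the population loss at the cost of a generalization slack.

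First I would use Lemma~\ref{lem:l2-dec} to argue that each pass through the \textbf{while} loop of Algorithm~\ref{alg:ma-empirical} reduces the empirical $\ell_2$ loss $\tfrac{1}{n}\sum_{i=1}^n (p(x_i)-y_i)^2$ by at least $\epsilon^2$. Since the loss is bounded in $[0,1]$, the loop terminates after at most $1/\epsilon^2$ iterations, and in particular
\[
\frac{1}{n}\sum_{i=1}^n (p_\mathsf{PP}(x_i)-y_i)^2 \le \frac{1}{n}\sum_{i=1}^n (p(x_i)-y_i)^2.
\]
This is the empirical version of the desired inequality (with slack $0$).

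Next I would bound the number of functions $\mathsf{PP}_{\cA}$ can emit. The initial predictor is one of $|\cP|$ choices, and each iteration is determined by a pair $(C, v_C) \in \cC \times \{-1,+1\}$, giving $2|\cC|$ choices per iteration; since there are at most $1/\epsilon^2$ iterations, the image of $\mathsf{PP}_{\cA}$ lies in some fixed set $\cP'$ with $|\cP'| \le |\cP| \cdot (2|\cC|)^{1/\epsilon^2}$. Crucially, this set depends only on the algorithm, not on the random sample.

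Now I would apply Hoeffding's inequality together with a union bound to the bounded random variables $(q(x)-y)^2 \in [0,1]$ for every $q \in \cP \cup \cP'$. For $n \ge \tfrac{\log(|\cP|(2|\cC|)^{1/\epsilon^2}/\delta)}{2\epsilon^2}$, with probability at least $1-\delta$ every such $q$ satisfies
\[
\left| \frac{1}{n}\sum_{i=1}^n (q(x_i)-y_i)^2 - \bE_\cD[(q(x)-y)^2] \right| \le \epsilon.
\]
Applying this to both $p$ and $p_\mathsf{PP}$ and chaining with the empirical inequality above yields $\bE_\cD[(y-p_\mathsf{PP}(x))^2] \le \bE_\cD[(y-p(x))^2] + 2\epsilon$, as desired.

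The main obstacle is the counting step for $|\cP'|$: one has to notice that although $p_\mathsf{PP}$ is a data-dependent object, the \emph{set} of predictors that can possibly arise across all inputs is controlled by the bounded number of iterations (from Lemma~\ref{lem:l2-dec}) and the discrete choice space $\cC \times \{\pm 1\}$ per iteration. Once this is observed, the uniform-convergence calculation is a standard Hoeffding-plus-union-bound argument, and everything else is bookkeeping. A minor subtlety is that uniform convergence must cover both $\cP$ (to handle $p$) and the larger set $\cP'$ (to handle $p_\mathsf{PP}$), but since $\cP \subseteq \cP \cup \cP'$ the stated sample-size bound absorbs this.
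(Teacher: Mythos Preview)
Your proposal is correct and follows essentially the same route as the paper: the paper's Theorem~\ref{thm:loss-min} is proved exactly by combining Lemma~\ref{lem:l2-dec} (empirical loss drops by $\epsilon^2$ per iteration) with the counting bound $|\cP|(2|\cC|)^{k}$ on reachable predictors and a Hoeffding/union-bound argument, and the corollary then specializes to the terminal iterate with $k\le 1/\epsilon^2$. The only cosmetic difference is that the paper retains the $-k\epsilon^2$ term in the intermediate theorem before discarding it, whereas you drop it immediately; your handling of the union over $\cP\cup\cP'$ is also fine since the $0$-iteration predictors already sit inside the counted set.
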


\begin{figure*}[t]
    \includegraphics[width=\textwidth]{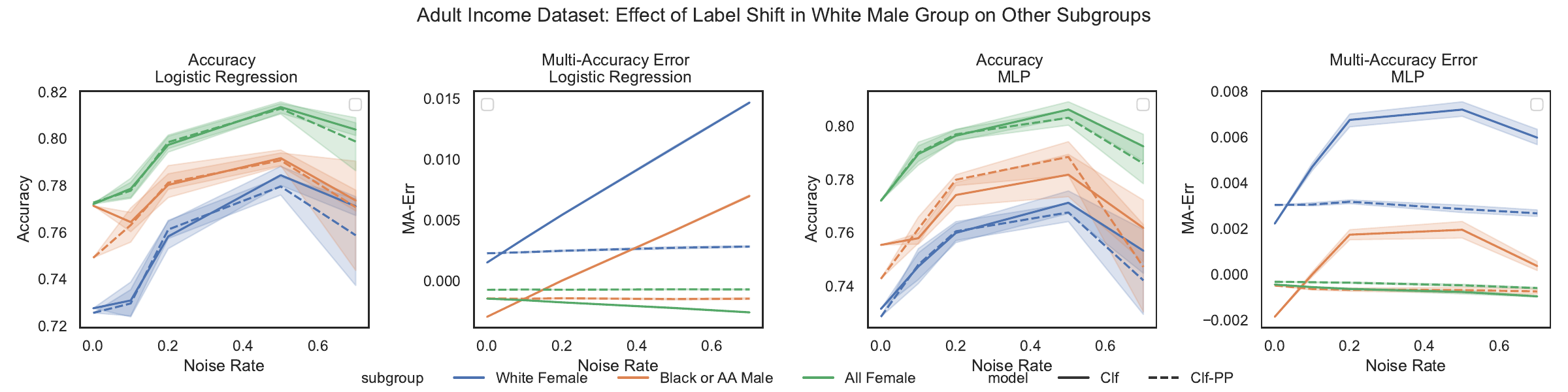}
    \caption{\small The effect of label change (0 to 1) in White male group on other subpopulations. For $\mae$ (closer to 0 is better),  the base models (\textsc{Clf}) are susceptible to noise other groups, Algorithm \ref{alg:ma-empirical} produces multigroup robust predictors (\textsc{Clf-PP}).}
    \label{fig:type1-label-shift}
\end{figure*}
\begin{figure*}
    \centering
    \includegraphics[width=\textwidth]{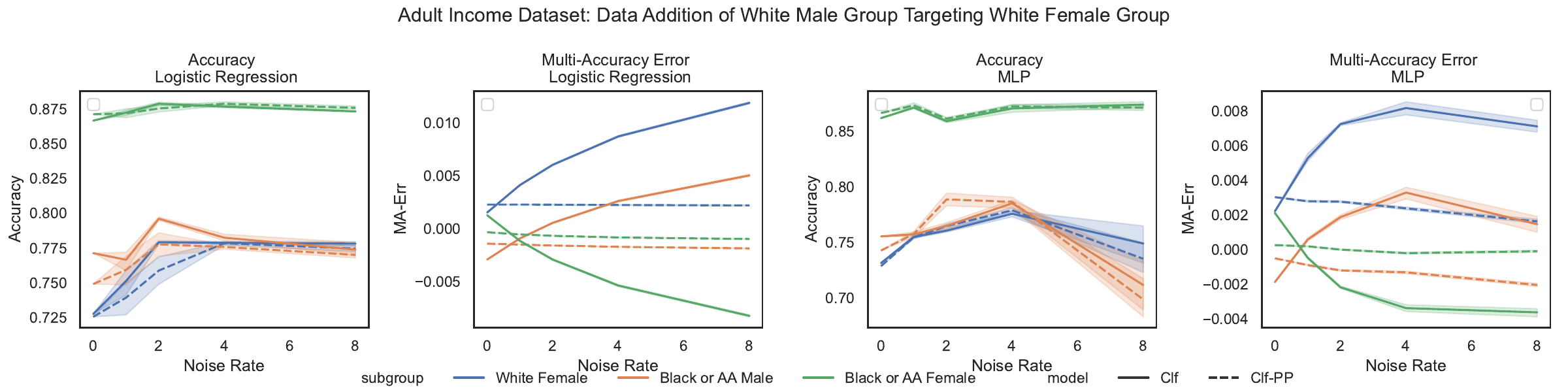}
    \caption{\small The effect of targeting the White female subgroup when only data addition from the White male subgroup is allowed. Multigroup robust predictors (\textsc{Clf-PP}) maintain a consistently low $\mae$ and high accuracy as more corrupted data is injected.}
    \label{fig:type2-addition}
\end{figure*}
\subsection{Additional Extensions and Remarks}

\paragraph{A data-efficient alternative algorithm.} Previous algorithms for multicalibration require fresh data in each ``boosting'' iteration or apply adaptive data analysis techniques to ensure generalization~\citep{hebert2018multicalibration, kim2019multiaccuracy}.  
In contrast, \Cref{thm:general-robustness-and-ma} shows that the iterations can instead simply reuse the same data as in Algorithm~\ref{alg:ma-empirical}, as long as the final model belongs to a class with bounded complexity. This property is often guaranteed for multicalibration algorithms (see e.g., \Cref{lem:uniform-convergence}).\footnote{While our results are stated for multiaccuracy, they can be applied to multicalibration with appropriate modifications.} At a high level, previous multicalibration algorithms are analogous to the original boosting algorithm \citep{weak}, whereas ours are analogous to AdaBoost algorithm \citep{adaboost}. Using this alternative approach is crucial for our robustness guarantees.

\paragraph{Omniprediction.} Recent work has explored the concept of an \emph{omnipredictor}--a predictor whose predictions can be post-processed to select near-optimal actions for a large variety of loss functions, rather than needing to train the predictor to optimize for a specific loss function~\citep{gopalan2022omnipredictors, gopalan2023loss, hu2023omnipredictors, garg2024oracle}. It is known that multicalibration (a strengthening of multiaccuracy) is a sufficient condition for omniprediction. We note that our algorithm can be extended to the setting of multicalibration with appropriate modifications, which would provide a multigroup robust omniprediction algorithm that could provide accuracy guarantees beyond $\ell_2$ loss. However, we note that while such an algorithm would be multigroup robust with respect to the outputted predictor, it is not clear if it would be multigroup robust with respect to the post-processed optimal actions. We leave this as a direction for future work.

\paragraph{Postprocessing on Fresh Data.} While we envision using our post-processing step as part of an end-to-end learning pipeline and thus use the original data during post-processing, in certain settings, the original learning algorithm's training data may be unavailable and the post-processing step might need to use fresh data. In this case, we would continue to preserve accuracy guarantees, but could only guarantee multigroup robustness against strong adversaries with respect to the dataset used for post-processing. In the weaker distribution shift setting, fresh data for post-processing is enough to ensure overall multigroup robustness assuming that the distribution of the post-processing set is the same as that of the training data. 

\section{Experiments}
\label{sec:experiments}

\paragraph{Models and Datasets}
Due to the multigroup focus of our work, we use Folktables-Income ~\citep{ding2021retiring},  a modern version of the UCI Adult Dataset~\footnote{See Appendix C for full experiments on different datasets, different models classes, and detailed data and algorithm descriptions}. For this task, we seek to predict whether the income of individuals was above \$50k and we examine subgroups defined by race and sex. We compare robustness to different attacks in two settings for two models: (1) \textsc{Clf}: Base classifier model: either logistic regression (LR) or a two-layer neural network (MLP), and (2) \textsc{Clf-PP}: Base classifier with post-processing using Algorithm \ref{alg:ma-empirical}. 
We report subgroup $C$ accuracy and multi-accuracy error for each model at different noise rates:  
\begin{itemize}
    \item $\mathsf{Acc}_{\cD}(p, C)$: $\frac{1}{|C|}\sum_{(x_i, y_i) \in C }\one[y_i = \one[p(x_i)>\gamma]]$
    \item $\mae_{\cD}(p, C)$: $\frac{1}{n}\sum_{i = 1}^n(p(x_i) - y_i)\one(x_i \in C)$
\end{itemize}
We seek to measure these quantities since the multi-accuracy error is the supremum of this $\mae_{\cD}(p, C)$ across all identifiable subgroups $C \in \cC$ and $\mathsf{Acc}_{\cD}(p, C)$ gives an average classification metric where the $\gamma$ threshold is optimized on the entire held-out validation set.

\paragraph{Label-Change}
We first consider robustness to label change when the set of training examples is unchanged. The noise rate represents the proportion of data points in the modified subgroup that has been shifted from 0 to 1\footnote{Full algorithm for label change poisoning is in Appendix C}. Figure \ref{fig:type1-label-shift} shows the effect of randomly shifting labels in the White male group on three other groups that are unchanged: White female, Black or African American Male and Female subgroups. For Logistic Regression, the base classifier \textsc{Clf}, results in increased bias as the noise rate increases while the \textsc{Clf-PP} predictor retains a low multi-accuracy error while maintaining similar accuracy to the original model. A similar phenomenon is observed in the neural network (MLP) where \textsc{Clf-PP} remains multigroup robust. 

\paragraph{Addition/Deletion}
We also demonstrate the robustness of Algorithm \ref{alg:ma-empirical} through to additional data points designed to attack a specific target group. We employ a similar strategy as prior work designing subpopulation attacks~\citep{jagielski2021subpopulation} with the additional constraint that only data points outside of the target group can be used to create the poisoning dataset. To find data points to add that would affect the target group, we first cluster data points in a held-out set (not used for training or testing) using K-means. For each cluster where the target subgroup appears, we shift the labels of the of points we are allowed to modify and add them to the training data. The amount of noise in this attack is scaled by how many times the identified data points are replicated before being added to the poisoned dataset. In Figure \ref{fig:type2-addition}, we see that even at low levels of noise (i.e., poisoned points are replicated once or twice), the Logistic Regression and Neural Network (MLP) classifiers exhibit worsened multi-accuracy error while their post-processed counterparts (\textsc{Clf-PP}) exhibit consistently low $\mae$ without lower accuracy. 

\section{Discussion and Future Work}
Motivated by practical scenarios where subgroups in datasets may be corrupted, we present \emph{multigroup robustness} and provide an algorithm that gives meaningful robustness guarantees. Moreover, we empirically show that while standard models allow unrelated groups to suffer under data poisoning attacks, our algorithm applied to post-processing these predictors using the same poisoned data achieves multigroup robustness. While our analysis was limited to the binary setting, notions of multigroup robustness in the multi-class setting are an exciting direction for future work.

\bibliography{ref}

\appendix
\section{Detailed Related Work}
\paragraph{Fairness and influential Datapoints}
The fairness outcomes of machine learning models can be attributed to training data \cite{roh2021sample}. For example, demographic parity and equality of opportunity can depend on very few instances in the dataset according to influence scores. As a result, simply removing highly influential points can yield a dataset that results in models that are Pareto optimal for error and fairness metrics \citep{sattigeri2022fair}. More generally beyond just fairness metrics, \citet{ilyas2022datamodels} present a framework for prediction model prediction based on different subsets of the training data. 

\paragraph{Fairness and Data Poisoning}
Prior works have demonstrated that the fairness properties of machine learning models can be degraded by modifying small subsets of the training data~\cite{solans2020poisoning, van2022poisoning, chai2023robust}. Furthermore, \citet{jagielski2021subpopulation} shows empirically that subpopulations can be directly targeted in data poisoning attacks while minimizing the impact on non-target subpopulations. Algorithms for finding fair predictors that are robust to data poisoning to any part of the dataset have also been proposed for regression~\cite{jin2023fairness}. The term ``subgroup robustness" has also appeared in recent literature to mean the performance of the worst, often intersectional group \cite{martinez2021blind, gardner2022subgroup}. In the context of distribution shift, ``subgroup robustness" is used interchangeably with ``worst group robustness" \citep{sagawa2019distributionally}. In contrast, our definitions focus on characterizing how modifications to training data, due to adversaries or sampling biases, impact different subgroups. 

\paragraph{Multiaccuracy and Multicalibration}
Multiaccuracy and multicalibration are multigroup fairness notions introduced by 
\citet{hebert2018multicalibration} \citep[see also][]{gerrymander,tradeoff}. These notions require a predictor to provide meaningful statistical guarantees (e.g., accuracy in expectation, calibration) on a large family of possibly overlapping subgroups of a population. 
\citet{kim2019multiaccuracy} apply post-processing to neural network models to achieve multiaccuracy.
Recently, \citet{lossmc} show that minimizing a proper loss over neural networks of a certain size yields multicalibration w.r.t.\ all subgroups identifiable by neural nets of a smaller size. 

\citet{uniadapt} show that multicalibration can ensure a predictor's robustness against distribution shift, achieving \emph{universal adaptability}. They focus on covariate shift, where the marginal distribution of $x$ may change between training and testing, but the conditional distribution of $y$ given $x$ remains the same. Their results assume that the covariate shift can be represented by a propensity score function from a given class. Our work considers general forms of data corruption, both in covariate $x$ and label $y$, and the corrupted data need not be i.i.d.\ from any distribution.

\section{Missing Proofs}

\subsection{Section 5}\label{sec:sec5-proofs}

\begin{proof}[Proof of Lemma~\ref{lem:ma-rob}]
    The proof follows by invoking the definition of multiaccuracy. Consider any $C \in \cC$. Because $p$ and $p'$ are both $\varepsilon$-MA with respect to $\cD$ and $\cD'$, respectively, we are guaranteed that 
    \begin{align}
        \left| \mae_{\cD}(p, C) - \mae_{\cD'}(p', C)\right| \leq 2\varepsilon
    \end{align}

    Expanding out the definition of $\mae$ gives us 
    \begin{align}
        &\mae_{\cD}(p, C) - \mae_{\cD'}(p', C) \\
        &= \mae_{\cD}(p, C) - \mae_{\cD'}(p', C)\\
        &\quad+ \bE_{(x, y) \sim \cD}[p'(x)\one[x \in C]]\\
        &\quad- \bE_{(x, y) \sim \cD}[p'(x)\one[x \in C]]\\
        &= \bE_{(x, y) \sim \cD}[(p'(x) - p(x))\one[x \in C]]\label{eq:ma-total-1}\\
        &\quad+ \mae_{\cD}(p', C) - \mae_{\cD'}(p', C)\label{eq:ma-total-2}
    \end{align}

    And thus we are guaranteed that the absolute value of the equation spanning lines \ref{eq:ma-total-1} and \ref{eq:ma-total-2} is at most $2\varepsilon$. By moving the terms in line~\ref{eq:ma-total-2} to the right-hand-side, we get the lemma's statement:
        \begin{align}
        &\left|\bE_{\cD}[(p(x) - p'(x))\one[x \in C]]\right| \\&\leq
        \left| \mae_{\cD}(p', C) - \mae_{\cD'}(p', C)\right|+ 2\varepsilon.
    \end{align}

    This completes the proof.
\end{proof}

\begin{proof}[Proof of Lemma~\ref{lem:ma-robust-dist-shift}]
    For an arbitrary predictor $p'$ and $C \in \cC$, we expand the quantity in the upper bound of Lemma~\ref{eq:ma-lemma}:

    \begin{align*}
        &\mae_{\cD}(p', C) - \mae_{\cD'}(p', C) \\
        &= \bE_{\cD}[(y - p'(x))\one[x \in C]] - \bE_{\cD'}[(y - p'(x))\one[x \in C]]\\
        &= \bE_{\cD}[y\one[x \in C]] - \bE_{\cD'}[y\one[x \in C]]\\
        &\quad+ \bE_{\cD'_X}[p'(x)\one[x \in C]] - \bE_{\cD_X}[p'(x)\one[x \in C]]
    \end{align*}

    Where we note that we can upper bound the quantity on the last line by 
    \begin{align*}
        &\bE_{\cD'_X}[p'(x)\one[x \in C]] - \bE_{\cD_X}[p'(x)\one[x \in C]]\\
        &= \sum_{x \in C}\left(\Pr_{X \sim \cD'_X}[X = x] - \Pr_{X \sim \cD_X}[X = x]\right)p'(x)\\
        &\leq \sum_{x \in C}|\Pr_{X \sim \cD'_X}[X = x] - \Pr_{X \sim \cD_X}[X = x]|\\
        &= \Delta_C(\cD_X, \cD'_X).
    \end{align*}

    Therefore,
    \begin{align*}
        &\left|\mae_{\cD}(p', C) - \mae_{\cD'}(p', C)\right| \\
        &\leq \left|\bE_{\cD}[y\one[x \in C]] - \bE_{\cD'}[y\one[x \in C]]\right| + \Delta_C(\cD_X, \cD'_X)
    \end{align*}

    Thus, whenever $p$ and $p'$ are both $\varepsilon$-MA, we can apply Lemma~\ref{lem:ma-rob} to conclude that 
    \begin{equation}
    \label{eq:dist-shift-req}
    \begin{aligned}
        &\left|\bE_{\cD}[(p(x) - p'(x))\one[x \in C]]\right| \\
        &\leq \left|\bE_{\cD}[y\one[x \in C]] - \bE_{\cD'}[y\one[x \in C]]\right| \\
        &+ \Delta_C(\cD_X, \cD'_X) + 2\varepsilon.
    \end{aligned}
    \end{equation}
    
    It remains to show that this happens with probability at least $1 - 2\delta$ over the randomness of the samples from $\cD$ and $\cD'$. This follows from observing that the probability that $\cA$ fails to output a multiaccurate predictor is at most $\delta$, and so the probability $\cA$ outputs a $p$ or $p'$ that is not multiaccurate is at most $2\delta$ by a union bound. We conclude that equation~\ref{eq:dist-shift-req} holds with probability at least $1 - 2\delta$ over the random samples from $\cD$ and $\cD'$, completing the proof.
\end{proof}

\begin{proof}[Proof of Lemma~\ref{lem:ma-ptwise-rob}]
    For ease of notation, we denote $\bE_{(x,y) \sim \mathsf{Uni}(S)}$ and $\bE_{(x, y) \sim \mathsf{Uni}(S')}$ by $\bE_{S}$ and $\bE_{S'}$ respectively. 

    Consider any $C \in \cC$. We rewrite the LHS of equation~\ref{eq:pointwise-rob-lem} as 
    \begin{align*}
        &\left|\bE_{S}[(p'(x) - p(x))\one[x \in C]]\right|\\
        &= \left|\bE_{S}[(y - p(x))\one[x \in C]]\right| + \left|\bE_{S}[(p'(x) - y)\one[x \in C]]\right|\\
        & \leq \left|\bE_{S}[(p'(x) - y)\one[x \in C]]\right| + \varepsilon
    \end{align*}
    where the last line invokes the assumption that $p$ is $(\cC, \varepsilon)$-MA with respect to $\mathsf{Uni}(S)$. Moreover, we have
    \begin{align*}
        &\left|\bE_{S}[(p'(x) - y)\one[x \in C]]\right| + \varepsilon \\
        &\leq \left|\bE_{S}[(p'(x) - y)\one[x \in C]] - \frac{m}{n}\bE_{S'}[(p'(x) - y)\one[x \in C]]\right| \\
        &\quad + \frac{m}{n}\left|\bE_{S'}[(p'(x) - y)\one[x \in C]]\right| + \varepsilon \\
        &\leq \left|\bE_{S}[p'(x)\one[x \in C]] - \frac{m}{n}\bE_{S'}[p'(x)\one[x \in C]]\right| \\
        &\quad + \left|\bE_{S}[y\one[x \in C]] - \frac{m}{n}\bE_{S'}[y\one[x \in C]]\right|  + \frac{m}{n}\varepsilon' + \varepsilon 
    \end{align*}
    Where this time we use the assumption that $p'$ is $(\cC, \varepsilon')$-MA with respect to $\mathsf{Uni}(S').$ Substituting in the definition of the uniform distributions over $S$ and $S'$, we have 
    \begin{align}
        &\left|\bE_{S}[p'(x)\one[x \in C]] - \frac{m}{n}\bE_{S'}[p'(x)\one[x \in C]]\right| \\
        &\quad + \left|\bE_{S}[y\one[x \in C]] - \frac{m}{n}\bE_{S'}[y\one[x \in C]]\right|  + \frac{m}{n}\varepsilon' + \varepsilon \\
        &= \frac{1}{n}\left|\sum_{i = 1}^n p'(x_i)\one[x_i \in C]] - \sum_{j = 1}^m p'(x_j')\one[x_j' \in C]]\right|\label{eq:p'-diff} \\
        &\quad + \frac{1}{n}\left|\sum_{i = 1}^n y_i\one[x_i \in C]] - \sum_{j = 1}^m y_j'\one[x_j' \in C]]\right|  + \frac{m}{n}\varepsilon' + \varepsilon 
    \end{align}

    Thus, it suffices to show that \ref{eq:p'-diff} is upper-bounded by $\frac{1}{n}|(S \Delta_X S') \cap C|$. Indeed, we can rewrite in terms of maps $\mu_S$ and $\mu_S'$ for the multisets $\{x_1, ..., x_n\}$ and $\{x_1', ..., x_m'\}$ to get 
    
    \begin{align*}
        &\frac{1}{n}\left|\sum_{i = 1}^n p'(x_i)\one[x_i \in C]] - \sum_{j = 1}^m p'(x_j')\one[x_j' \in C]]\right| \\
        &= \frac{1}{n}\left|\sum_{x \in X}(\mu_S(x) - \mu_{S'}(x))p'(x)\one[x \in C]\right|\\
        &\leq \frac{1}{n}\left|\sum_{x \in X}|\mu_S(x) - \mu_{S'}(x)|\one[x \in C]\right|\\
        &= \frac{1}{n}|(S \Delta_X S') \cap C|.
    \end{align*}

    This completes the proof. 
\end{proof}

\begin{proof}[Proof of Theorem~\ref{thm:general-robustness-and-ma}]
    We restate the theorem as two separate lemmas (\ref{thm:general-robustness} and \ref{thm:general-ma}) and provide proofs for each below. The result immediately follows from the combination of these lemmas. 
\end{proof}

\begin{lemma}\label{thm:general-robustness}
Let $\cA: (X \times \{0, 1\})^* \rightarrow \cP$ be a deterministic learning algorithm that outputs predictors from the family $\cP \subseteq [0,1]^X$. For a family of subpopulations $\cC \subseteq 2^X$, suppose that $\cA$ satisfies empirical $(\cC, \varepsilon_1)$-multiaccuracy
and additionally suppose that $\cP$ satisfies $(\cC, n, \varepsilon_2, \delta_2)$-uniform convergence. Then, $\cA$ is $(\cC, n, \left(1 + \frac{m}{n}\right)\varepsilon_1 + 2\varepsilon_2, \delta_2)$-multigroup robust.
\end{lemma}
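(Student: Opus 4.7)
The plan is to chain together \Cref{lem:ma-ptwise-rob} (which bounds the relevant quantity on the empirical distribution of $S$) with the uniform convergence assumption on $\cP$ (which lets us transfer that bound from the empirical distribution of $S$ to the population distribution $\cD_X$).

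First, since $\cA$ is empirically $(\cC,\varepsilon_1)$-multiaccurate by assumption, the predictors $p := \cA(S)$ and $p' := \cA(S')$ are $(\cC,\varepsilon_1)$-multiaccurate with respect to $\mathsf{Uni}(S)$ and $\mathsf{Uni}(S')$, respectively, \emph{regardless} of how $S'$ was produced. Applying \Cref{lem:ma-ptwise-rob} with $\varepsilon = \varepsilon' = \varepsilon_1$ yields, for every $C\in\cC$,
\begin{align*}
\bigl|\bE_{\mathsf{Uni}(S)}[(p(x)-p'(x))\one[x\in C]]\bigr|
&\le \tfrac{1}{n}\Bigl|\textstyle\sum_i y_i\one[x_i\in C] - \sum_j y_j'\one[x_j'\in C]\Bigr|\\
&\quad + \tfrac{1}{n}|(S\Delta_X S')\cap C| + \bigl(1+\tfrac{m}{n}\bigr)\varepsilon_1.
\end{align*}
This already matches the RHS of the multigroup robustness definition, except the LHS is an expectation over the empirical distribution of $S$ instead of over $\cD_X$.

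Second, I would invoke the $(\cC,n,\varepsilon_2,\delta_2)$-uniform convergence of $\cP$ on any distribution with marginal $\cD_X$ (e.g.\ the one that puts labels identically at $0$). With probability at least $1-\delta_2$ over the draw of $X_n=(x_1,\dots,x_n)$ from $\cD_X$, the first inequality in \Cref{def:uniform-cvg} holds simultaneously for \emph{all} $q\in\cP$ and all $C\in\cC$. Crucially, this simultaneity is what allows us to apply it to the adversarially determined $p'=\cA(S')\in\cP$; this is exactly the ``stronger use of uniform convergence'' noted in \Cref{rem:uniform-cvg}. Applying uniform convergence to both $p\in\cP$ and $p'\in\cP$ and using the triangle inequality gives
\begin{equation*}
\bigl|\bE_{\cD_X}[(p(x)-p'(x))\one[x\in C]] - \bE_{\mathsf{Uni}(S)}[(p(x)-p'(x))\one[x\in C]]\bigr| \le 2\varepsilon_2
\end{equation*}
for every $C\in\cC$, on the same $1-\delta_2$ probability event.

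Combining the two steps by one more application of the triangle inequality produces the claimed bound with error $(1+m/n)\varepsilon_1 + 2\varepsilon_2$, and the failure probability is exactly the $\delta_2$ coming from uniform convergence. The main conceptual point (as opposed to calculation) is the second step: we must apply uniform convergence to a predictor $p'$ chosen by an unrestricted adversary who sees $S$ (and hence depends on $X_n$), which is only legitimate because uniform convergence is a \emph{uniform} statement over the entire class $\cP$ in which $p'$ is guaranteed to lie. This is the step to be most careful about; everything else is arithmetic via the triangle inequality.
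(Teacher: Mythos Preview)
Your proposal is correct and follows essentially the same approach as the paper: apply \Cref{lem:ma-ptwise-rob} to control the empirical quantity, then use uniform convergence of $\cP$ (applied to both $p$ and $p'$ via the triangle inequality) to pass from $\mathsf{Uni}(S)$ to $\cD_X$, incurring the additional $2\varepsilon_2$ with failure probability $\delta_2$. Your remark about instantiating uniform convergence on a distribution with marginal $\cD_X$ (e.g., labels identically $0$) is a useful technical clarification that the paper leaves implicit.
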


\begin{lemma}\label{thm:general-ma}
Let $\cA: (X \times \{0, 1\})^* \rightarrow \cP$ be a deterministic learning algorithm that outputs predictors from the family $\cP \subseteq [0,1]^X$. For a family of subpopulations $\cC \subseteq 2^X$, suppose that $\cA$ satisfies empirical $(\cC, \varepsilon_1)$-multiaccuracy and additionally suppose that $\cP$ satisfies $(\cC, n, \varepsilon_2, \delta_2)$-uniform convergence. Then for any distribution $\cD$ over $X \times \{0, 1\}$, $\cA$ is a $(\cC, n, \varepsilon_1 + 2\varepsilon_2, \delta_2)$-multiaccurate learning algorithm. 
\end{lemma}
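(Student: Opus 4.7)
The plan is to reduce the claim to a direct application of the triangle inequality, combining the empirical multiaccuracy guarantee of $\cA$ with the uniform convergence guarantee for $\cP$. Fix a distribution $\cD$ over $X \times \{0,1\}$ and draw $S = ((x_1,y_1),\ldots,(x_n,y_n))$ i.i.d.\ from $\cD$. Let $p := \cA(S) \in \cP$. Our goal is to bound $|\bE_\cD[(y - p(x))\one(x \in C)]|$ for every $C \in \cC$, with probability at least $1 - \delta_2$.

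First, I invoke the $(\cC, n, \varepsilon_2, \delta_2)$-uniform convergence of $\cP$: with probability at least $1 - \delta_2$ over $S$, \emph{simultaneously} for every $p' \in \cP$ and every $C \in \cC$ we have
\[
\left|\tfrac{1}{n}\sum_{i=1}^n p'(x_i)\one(x_i \in C) - \bE_\cD[p'(x)\one(x \in C)]\right| \le \varepsilon_2,
\]
and similarly for the label term $y\one(x \in C)$. Condition on this event for the remainder of the argument. The crucial point is that since $p = \cA(S) \in \cP$, the uniform convergence bound applies to the (data-dependent) $p$ itself, which is exactly the ``strong use of uniform convergence'' highlighted in Remark~\ref{rem:uniform-cvg}.

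Next, I split the target quantity as
\begin{align*}
\bE_\cD[(y - p(x))\one(x \in C)]
&= \Bigl(\bE_\cD[y\one(x \in C)] - \tfrac{1}{n}\sum_{i=1}^n y_i\one(x_i \in C)\Bigr)\\
&\quad + \tfrac{1}{n}\sum_{i=1}^n (y_i - p(x_i))\one(x_i \in C)\\
&\quad + \Bigl(\tfrac{1}{n}\sum_{i=1}^n p(x_i)\one(x_i \in C) - \bE_\cD[p(x)\one(x \in C)]\Bigr).
\end{align*}
The first and third summands are each bounded in absolute value by $\varepsilon_2$ by uniform convergence, while the middle summand is bounded by $\varepsilon_1$ by empirical multiaccuracy of $\cA$ (which holds deterministically by Definition~\ref{def:emp-ma}). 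A triangle inequality gives $|\bE_\cD[(y - p(x))\one(x \in C)]| \le \varepsilon_1 + 2\varepsilon_2$ for every $C \in \cC$, which is exactly the $(\cC, n, \varepsilon_1 + 2\varepsilon_2, \delta_2)$-multiaccuracy statement.

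There is no real obstacle here beyond keeping the quantifiers straight: the only subtle point is that $p$ depends on $S$, so we genuinely need the $(\cC, n, \varepsilon_2, \delta_2)$-uniform convergence over the entire class $\cP$ rather than concentration for a fixed predictor. Once that is in place, the argument is a one-line triangle inequality, and the failure probability $\delta_2$ is inherited directly from the uniform convergence event without any additional union bound.
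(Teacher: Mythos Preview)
Your proof is correct and essentially identical to the paper's own argument: both decompose the population multiaccuracy error into the empirical multiaccuracy error (bounded by $\varepsilon_1$) plus the two uniform-convergence gaps for the $y$-term and the $p$-term (each bounded by $\varepsilon_2$), then apply the triangle inequality. The only cosmetic difference is that the paper first peels off the $\varepsilon_1$ term and then splits the remainder, whereas you write the three-term decomposition in one shot; as a minor aside, your appeal to Remark~\ref{rem:uniform-cvg} is slightly off since that remark concerns the corrupted-data robustness setting, while here the data is genuinely i.i.d.\ and the need for uniformity over $\cP$ is just the standard reason (the output $p$ is data-dependent).
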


\begin{proof}[Proof of Lemma~\ref{thm:general-robustness}]
Take any $C \in \cC$ and distribution $\cD_X$. Let $x_1, ..., x_n$ be points drawn i.i.d. from $\cD_X$, and let $S = (x_1, y_1), ..., (x_n, y_n)$, $S' = (x_1', y_1'), ..., (x_m', y_m')$ be datasets where the points other than $x_1, ..., x_n$ can be any value. 

We consider the LHS of the multigroup robustness criterion:
\begin{align}
&\left|\bE_{D_X}[(p(x) - p'(x))\one[x \in C]]\right| \\
&\leq \left|\bE_{D_X}[(p(x) - p'(x))\one[x \in C]] - \bE_S[(p(x) - p'(x))\one[x \in C]] \right|\\
&\quad+ \left|\bE_S[(p(x) - p'(x))\one[x \in C]] \right|\label{eq:emp-distance}\\
\end{align}

Due to our assumption of empirical multiaccuracy, we know that $p$ and $p'$ are $(\cC, \varepsilon_1)$-MA predictors with respect to $\mathsf{Uni}(S)$ and $\mathsf{Uni}(S')$, respectively. Thus, we can apply Lemma~\ref{lem:ma-ptwise-rob} to upper bound \ref{eq:emp-distance} to get
\begin{align}
&\left|\bE_{D_X}[(p(x) - p'(x))\one[x \in C]] - \bE_S[(p(x) - p'(x))\one[x \in C]] \right|\\
&\quad+ \left|\bE_S[(p(x) - p'(x))\one[x \in C]] \right|\\
&\leq \left|\bE_{D_X}[(p(x) - p'(x))\one[x \in C]] - \bE_S[(p(x) - p'(x))\one[x \in C]] \right|\label{eq:pred-uc-dist}\\
&\quad+ \frac{1}{n}\left|\sum_{i = 1}^ny_i \one[x_i \in C] - \sum_{j = 1}^my_j'\one[x_j' \in C]\right|\\
&\quad + \frac{1}{n}|(S \Delta_X S')\cap C| + \left(1 + \frac{m}{n}\right)\varepsilon_1.\\
\end{align}

Finally, we use our uniform convergence assumption to upper-bound \ref{eq:pred-uc-dist}:

\begin{align*}
    &\left|\bE_{D_X}[(p(x) - p'(x))\one[x \in C]] - \bE_S[(p(x) - p'(x))\one[x \in C]] \right|\\
    &\leq \left|\bE_{D_X}[p(x)\one[x \in C]] - \bE_S[p(x)\one[x \in C]] \right| \\
    &\quad+ \left|\bE_{D_X}[p'(x)\one[x \in C]] - \bE_S[p'(x)\one[x \in C]] \right|\\
    &\leq 2\varepsilon_2
\end{align*}

where the final bound follows from uniform convergence and holds simultaneously for all $C \in \cC$ with probability at least $1 - \delta_2$. 

Substituting this into our original bound, we conclude that with probability at least $1 - \delta_2$, for all $C \in \cC$, $p$ and $p'$ satisfy 

\begin{align*}
    &\left|\bE_{D_X}[(p(x) - p'(x))\one[x \in C]]\right|\\
    &\leq \frac{1}{n}\left|\sum_{i = 1}^ny_i \one[x_i \in C] - \sum_{j = 1}^my_j'\one[x_j' \in C]\right|\\
&\quad + \frac{1}{n}|(S \Delta_X S')\cap C| + \left(1 + \frac{m}{n}\right)\varepsilon_1 + 2\varepsilon_2. 
\end{align*}

Thus, we conclude that $\cA$ is $(\cC, n, \left(1 + \frac{m}{n}\right)\varepsilon_1 + 2\varepsilon_2, \delta_2)$-multigroup robust. 
\end{proof}

\begin{proof}[Proof of Lemma~\ref{thm:general-ma}]
Consider any $\cD$ and let $p = \cA((x_1, y_1), ..., (x_n, y_n))$ where each $(x_i, y_i)$ is drawn i.i.d. from $\cD$ for sufficiently large $n \geq m(\cC, \varepsilon_2, \delta_2)$. 

Consider any $C \in \cC$. We note that 
\begin{align*}
    &\left|\bE_{D}[(y - p(x))\one[x \in C]]\right| \\
    &= \left|\bE_{D}[(y - p(x))\one[x \in C]] + \frac{1}{n}\sum_{i = 1}^n(y_i - p(x_i))\one[x_i \in C] - \frac{1}{n}\sum_{i = 1}^n(y_i - p(x_i))\one[x_i \in C]\right|\\
    &\leq \left|\bE_{D}[(y - p(x))\one[x \in C]] - \frac{1}{n}\sum_{i = 1}^n(y_i - p(x_i))\one[x_i \in C]\right| + \left|\frac{1}{n}\sum_{i = 1}^n(y_i - p(x_i))\one[x_i \in C]\right| \\
    &\leq \left|\bE_{D}[(y - p(x))\one[x \in C]] - \frac{1}{n}\sum_{i = 1}^n(y_i - p(x_i))\one[x_i \in C]\right| + \varepsilon_1 
\end{align*}
Where the last step holds by our assumption that $p$ is empirically multiaccurate. Continuing to simplify, we get 

\begin{align*}
    &\left|\bE_{D}[(y - p(x))\one[x \in C]] - \frac{1}{n}\sum_{i = 1}^n(y_i - p(x_i))\one[x_i \in C]\right| + \varepsilon_1  \\
    &\leq \left|\bE_{D}[p(x)\one[x \in C]] - \frac{1}{n}\sum_{i = 1}^n p(x_i)\one[x_i \in C]\right| + \left|\bE_{D}[y\one[x \in C]] - \frac{1}{n}\sum_{i = 1}^ny_i\one[x_i \in C]\right| + \varepsilon_1 
\end{align*}

By our uniform convergence guarantee, we are guaranteed that the above quantity is bounded by $2\varepsilon_2 + \varepsilon_1$ simultaneously for all $\cC$ with probability at least $1 - \delta_2$, and thus we know that with probability at least $1 - \delta_2$, $p$ satisfies 
\[\left|\bE_{D}[(y - p(x))\one[x \in C]]\right| \leq 2\varepsilon_2 + \varepsilon_1\]
for all $C \in \cC$, and thus is $(\cC, 2\varepsilon_2 + \varepsilon_1)$-MA with probability at least $1- \delta_2$, completing the proof. 
\end{proof}

\subsection{Section 6}\label{sec:sec6-proofs}

\begin{lemma}[Empirical MA of Algorithm~\ref{alg:ma-empirical}]\label{lem:stopping-condition}
    Algorithm~\ref{alg:ma-empirical} satisfies empirical $(\cC, \varepsilon)$-multiaccuracy.
\end{lemma}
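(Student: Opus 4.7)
The plan is straightforward: the conclusion follows essentially by inspection of the termination condition of the while loop in Step 2 of Algorithm~\ref{alg:ma-empirical}. First I would observe that the loop is written so as to continue iterating for exactly as long as there exists some $C \in \cC$ with $|\frac{1}{n}\sum_{i=1}^n (p(x_i) - y_i)\one(x_i \in C)| > \varepsilon$. Consequently, at the moment the loop exits and the algorithm returns its current predictor $p$, the negation of this condition must hold simultaneously for every $C \in \cC$, i.e.\ $|\frac{1}{n}\sum_{i=1}^n (p(x_i) - y_i)\one(x_i \in C)| \le \varepsilon$ for all $C \in \cC$.

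Next I would note that $|\frac{1}{n}\sum_{i=1}^n (p(x_i) - y_i)\one(x_i \in C)|$ is identical to $|\frac{1}{n}\sum_{i=1}^n (y_i - p(x_i))\one(x_i \in C)|$, so the condition established in the previous paragraph matches the definition of $(\cC, \varepsilon)$-empirical multiaccuracy in \Cref{def:emp-ma} verbatim. This yields the claim, provided the returned predictor $p$ exists, i.e.\ provided the while loop actually terminates.

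The only substantive concern, then, is termination of the loop, since otherwise no output predictor is produced and the statement would be vacuous. I would handle this by deferring to the companion result Lemma~\ref{lem:uniform-convergence} (and the underlying $\ell_2$-decrease observation of Lemma~\ref{lem:l2-dec}), which establishes a uniform bound on the number of iterations; I expect this to be the only non-immediate ingredient. For the present lemma, termination can simply be quoted, and the empirical multiaccuracy conclusion then follows directly from the loop's exit criterion as described above. In short, there is no real obstacle: the lemma is a direct consequence of the algorithm having been designed so that its stopping rule coincides exactly with the empirical multiaccuracy condition.
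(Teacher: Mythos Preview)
Your proposal is correct and mirrors the paper's own proof, which simply observes that the while loop's exit condition is exactly the empirical $(\cC,\varepsilon)$-multiaccuracy criterion. One small nit: the termination bound you invoke is stated in Lemma~\ref{lem:ma-emp-stopping-time} (via Lemma~\ref{lem:l2-dec}), not Lemma~\ref{lem:uniform-convergence}; the paper's proof of the present lemma does not even address termination explicitly.
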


\begin{proof}
    Note that the stopping condition of Algorithm~\ref{alg:ma-empirical} implies that for all $C \in \cC$, 
    \[\left|\frac{1}{n}\sum_{i = 1}^n (y_i - p(x_i))\one[x_i \in C]\right| \leq \varepsilon.\]
    Thus, $p$ must be empirically $(\cC, \varepsilon)$-MA. 
\end{proof}

While we have verified that the algorithm will satisfy empirical multiaccuracy upon termination, we have yet to verify whether it satisfies the uniform
convergence property, which together with Lemma~\ref{lem:stopping-condition} would imply multigroup robustness by our result in Theorem~\ref{thm:general-robustness-and-ma}. Moreover, we need to show that the post-processing
step does not decrease the predictor's $\ell_2$ error by too much, and also that the algorithm is actually guaranteed to terminate in a small number of steps. All of these properties will follow from the following lemma, which shows that each update 
step can \emph{only decrease} the predictor's $\ell_2$ error on the empirical distribution. 

\begin{lemma}\label{lem:l2-dec}
   Given an arbitrary dataset $S = \{(x_1, y_1), ..., (x_n, y_n)\}$, let $\nu_S: [0, 1]^X \rightarrow \bR^{\geq 0}$ be a function capturing the empirical $\ell_2$ error of a predictor on $S$:
   \[\nu_S(p) = \frac{1}{n}\sum_{i = 1}^n(y_i - p(x_i))^2.\]  
   Given input data $S$, every iteration of the while loop in Algorithm~\ref{alg:ma-empirical} strictly decreases $\nu_S$ by at least $\epsilon^2$, or terminates.
\end{lemma}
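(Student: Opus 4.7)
The plan is to track what happens to $\nu_S$ in a single pass of the while loop. Suppose the loop does not terminate at the current iteration, so there is some $C\in\cC$ with $\bigl|\tfrac{1}{n}\sum_{i=1}^n (p(x_i)-y_i)\one(x_i\in C)\bigr|>\varepsilon$, and the algorithm moves to the update step. Write $p$ for the predictor at the start of this iteration and decompose the update into two sub-steps: (i) the unclipped shift $p_{\mathrm{mid}}(x) := p(x) - v_C\varepsilon\one(x\in C)$, and (ii) the coordinatewise clipping $p'(x) := \max\{0,\min\{p_{\mathrm{mid}}(x),1\}\}$. I will bound $\nu_S(p)-\nu_S(p_{\mathrm{mid}})$ from below, then argue that clipping can only help.

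\textbf{Step 1: effect of the unclipped shift.} For each $x_i\notin C$, the unclipped update does nothing, so these terms cancel in $\nu_S(p)-\nu_S(p_{\mathrm{mid}})$. For each $x_i\in C$, expanding the square gives
\begin{equation*}
(y_i-p(x_i))^2 - (y_i-p_{\mathrm{mid}}(x_i))^2 \;=\; 2v_C\varepsilon\,(p(x_i)-y_i) - \varepsilon^2.
\end{equation*}
Summing over $i$ with $x_i\in C$ and using the definition $v_C=\sign\!\bigl(\sum_{i}(p(x_i)-y_i)\one(x_i\in C)\bigr)$, we get
\begin{equation*}
n\bigl(\nu_S(p)-\nu_S(p_{\mathrm{mid}})\bigr) \;=\; 2\varepsilon\!\left|\sum_{i=1}^n (p(x_i)-y_i)\one(x_i\in C)\right| - \varepsilon^2\,|\{i:x_i\in C\}|.
\end{equation*}
The trigger condition for entering this iteration says the absolute sum strictly exceeds $n\varepsilon$, and $|\{i:x_i\in C\}|\le n$, so the right-hand side is strictly greater than $2n\varepsilon^2 - n\varepsilon^2 = n\varepsilon^2$. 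Dividing by $n$ gives $\nu_S(p)-\nu_S(p_{\mathrm{mid}})>\varepsilon^2$.

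\textbf{Step 2: clipping only helps.} Since every label $y_i\in\{0,1\}$ lies in $[0,1]$, for any real $z$ the clipped value $\mathrm{clip}(z):=\max\{0,\min\{z,1\}\}$ satisfies $|y_i-\mathrm{clip}(z)|\le|y_i-z|$ (straightforward case check depending on whether $z<0$, $z\in[0,1]$, or $z>1$). Applied pointwise at each $x_i$, this gives $(y_i-p'(x_i))^2\le (y_i-p_{\mathrm{mid}}(x_i))^2$, hence $\nu_S(p')\le \nu_S(p_{\mathrm{mid}})$. Combining with Step 1 yields $\nu_S(p)-\nu_S(p')\ge \nu_S(p)-\nu_S(p_{\mathrm{mid}})>\varepsilon^2$, as required.

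\textbf{Anticipated obstacle.} The calculation itself is routine; the only subtlety worth flagging is making sure clipping does not wipe out the guaranteed decrease. The monotonicity argument in Step 2 handles this cleanly because $y_i\in[0,1]$, but it is the one place where the binary label assumption (or, more generally, $y_i\in[0,1]$) is actually used in the proof, and a different loss or a non-closed output range would break this step.
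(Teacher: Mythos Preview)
Your proof is correct and follows essentially the same approach as the paper: expand the squared-error difference on the shifted coordinates, use the sign choice $v_C$ to turn the cross term into an absolute value, invoke the while-loop trigger and $|\{i:x_i\in C\}|\le n$ to lower-bound the drop by $\varepsilon^2$, and then observe that clipping toward $[0,1]$ can only further reduce the loss since $y_i\in[0,1]$. Your decomposition into $p_{\mathrm{mid}}$ and the clipped $p'$ is slightly more explicit than the paper's presentation, but the argument is the same.
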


\begin{proof}[Proof of Lemma~\ref{lem:l2-dec}]
    Let $p \in [0, 1]^X$ be an arbitrary predictor and consider one iteration of the while loop in Algorithm~\ref{alg:ma-empirical}. 

    If there exists no $C \in \cC$ with $\left|\frac{1}{n}\sum_{i = 1}^n (p(x_i) - y_i)\one[x_i \in C] \right| > \varepsilon$, then the algorithm terminates by definition. 

    In the other case, there exists some $C \in \cC$ with
    $\left|\frac{1}{n}\sum_{i = 1}^n (p(x_i) - y_i)\one[x_i \in C] \right| > \varepsilon$. Let $v_C = \mathsf{sgn}(\frac{1}{n}\sum_{i = 1}^n (p(x_i) - y_i)\one[x_i \in C])$, and let $p'$ be the updated predictor such that 
    \[p'(x) = p(x) - v_C\varepsilon\one[x \in C].\]

    We consider the difference in potential functions:
    \begin{align*}
        \nu_S(p) - \nu_S(p') &= \frac{1}{n}\sum_{i = 1}^n (p(x_i) - y_i)^2 - (p'(x_i) - y_i)^2
    \end{align*}
    we note that for all $x_i \not\in C$, $p(x_i)= p'(x_i)$, so we can cancel out these terms to get 
    \begin{align*}
        &\frac{1}{n}\sum_{x_i, y_i \in S, x_i \in C} (p(x_i) - y_i)^2 - (p'(x_i) - y_i)^2\\
        &= \frac{1}{n}\sum_{x_i, y_i \in S, x_i \in C} (p(x_i) - y_i + p'(x_i) - y_i)(p(x_i) - y_i - p'(x_i) + y_i)\\
        &= \frac{1}{n}\sum_{x_i, y_i \in S, x_i \in C} (2(p(x_i) - y_i) - v_C\epsilon)v_C\epsilon\\
        & = 2\epsilon\left|\frac{1}{n}\sum_{i = 1}^n(p(x_i) - y_i)\one[x_i \in C]\right|- \frac{\left|\{x_i,y_i \in S, x_i \in C\}\right|\epsilon^2}{n} \\
        &\geq 2\epsilon^2 - \frac{\left|\{x_i,y_i \in S, x_i \in C\}\right|\epsilon^2}{n}\\
         &\geq 2\epsilon^2 - \epsilon^2\\
         &= \epsilon^2
    \end{align*}

    so, we have shown that $\nu_S(p) - \nu_S(p') \geq \epsilon^2$. Note that there is a final update 
    to clip $p'$ to be $[0, 1]$ values on all $x$. However, because every $y_i \in [0, 1]$, this can only reduce the $\ell_2$ error ($\nu_S$) of the updated predictor, and so we conclude that every non-terminating iteration of the algorithm reduces $\nu_S$ by at least $\epsilon^2$, proving the claim.
\end{proof}

As promised, a number of nice properties can now be derived using Lemma~\ref{lem:l2-dec}. First, we can bound the number of iterations of the algorithm via a potential function argument (See the appendix~\ref{sec:sec6-proofs} for details):

\begin{lemma}[Stopping Time of Algorithm~\ref{alg:ma-empirical}]\label{lem:ma-emp-stopping-time}
    Algorithm~\ref{alg:ma-empirical} makes at most $1/\epsilon^2$ iterations of the while loop before terminating. 
\end{lemma}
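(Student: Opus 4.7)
The plan is to use the potential function $\nu_S$ introduced in Lemma~\ref{lem:l2-dec} together with a simple boundedness argument. Concretely, I would argue that $\nu_S$ is always bounded between $0$ and $1$ throughout the execution of the algorithm, and since each non-terminating iteration of the while loop strictly decreases $\nu_S$ by at least $\epsilon^2$, the number of iterations is at most $1/\epsilon^2$.

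First I would establish the upper bound on the initial value of the potential. Since the algorithm initializes $p \leftarrow \cA(S)$ with $p(x) \in [0,1]$ for all $x$, and each $y_i \in \{0,1\} \subseteq [0,1]$, each summand $(y_i - p(x_i))^2$ lies in $[0,1]$, so $\nu_S(p) \in [0,1]$ at initialization. Next I would observe that the potential remains non-negative throughout, since it is a sum of squares. I would also note that Lemma~\ref{lem:l2-dec} already handles the clipping step, so the invariant $p(x) \in [0,1]$ is maintained across iterations and the same bound $\nu_S \in [0,1]$ persists.

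Then I would invoke Lemma~\ref{lem:l2-dec}: in any iteration that does not terminate, $\nu_S(p) - \nu_S(p') \geq \epsilon^2$, where $p'$ denotes the updated predictor. Combining this with the fact that $\nu_S$ starts at most at $1$ and can never fall below $0$, the total number of non-terminating iterations is bounded by $1/\epsilon^2$. The only step requiring any care is verifying that clipping $p$ into $[0,1]$ does not undo the guaranteed decrease, but this is precisely the observation already made at the end of the proof of Lemma~\ref{lem:l2-dec}, so there is no real obstacle — the argument is a one-line potential-function bound once Lemma~\ref{lem:l2-dec} is in hand.
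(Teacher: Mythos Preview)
Your proposal is correct and matches the paper's own proof essentially line for line: the paper uses $\nu_S$ as a potential function, observes that $0 \le \nu_S(p) \le 1$ for any $[0,1]$-valued predictor, and then invokes Lemma~\ref{lem:l2-dec} to conclude that more than $1/\epsilon^2$ non-terminating iterations would drive $\nu_S$ negative, a contradiction. The extra care you take in checking the initial bound and the clipping step is fine but not strictly needed beyond what Lemma~\ref{lem:l2-dec} already provides.
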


\begin{proof}[Proof of Lemma~\ref{lem:ma-emp-stopping-time}]
    Given input dataset $S = \{(x_1, y_1), ..., (x_n, y_n)\}$, we use the empirical $\ell_2$ error, 
    \[\nu_S(p) = \frac{1}{n}\sum_{i = 1}^n(y_i - p(x_i))^2\]

    as a potential function. We note that by definition, for any $p$, we are guaranteed that $0 \leq \nu_S(p) \leq 1$. 

    Thus, applying the result of Lemma~\ref{lem:l2-dec}, we can conclude that the number of iterations without termination can be at most $1/\epsilon^2$, otherwise $\nu_S$ would become negative, resulting in a contradiction. 
\end{proof}

Next, we show a uniform convergence result by bounding the set of predictors that can be outputted by Algorithm~\ref{alg:ma-empirical}. Intuitively, we do this by using the result of Lemma~\ref{lem:ma-emp-stopping-time} to conclude that not too many updates are made to the predictor during post-processing, and thus the class of predictors output by the post-processing step is not too much more complex than that of the original learning algorithm. 

\begin{lemma}\label{lem:uniform-convergence}
    Let $\cA: (X \times \{0, 1\})^* \rightarrow \cP$ be a deterministic learning algorithm that is guaranteed to output from a finite set of predictors $\cP \subseteq [0, 1]^X$. Let $\mathsf{PP}_{\cA}$ be the algorithm defined by Algorithm~\ref{alg:ma-empirical} on input $\cA$ and $\epsilon > 0$. Then, for any $\delta \in [0, 1]$ the family of predictors that can be output by $\mathsf{PP}_{\cA}$ satisfies $(\cC, n, \epsilon, \delta)$-uniform convergence for any $n \geq \frac{\log(|\cP|(2|\cC|)^{1/\epsilon^2 + 1}/\delta)}{2\epsilon^2}$.
\end{lemma}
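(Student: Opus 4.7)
The plan is to show that the family $\cP_{\mathsf{PP}}$ of \emph{all} predictors that $\mathsf{PP}_{\cA}$ can possibly output is a finite, data-independent class of bounded size, and then obtain uniform convergence by standard Hoeffding plus a union bound over $\cP_{\mathsf{PP}}$ and $\cC$.

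To bound $|\cP_{\mathsf{PP}}|$, I would invoke \Cref{lem:ma-emp-stopping-time}, which guarantees that every execution of Algorithm~\ref{alg:ma-empirical} terminates within $T := 1/\epsilon^2$ iterations of the while loop. Each iteration is completely specified by the pair $(C, v_C) \in \cC \times \{-1,+1\}$ (the subsequent clipping to $[0,1]$ is a deterministic function of the current $p$), so there are at most $2|\cC|$ possible distinct updates per iteration. Combined with the $|\cP|$ possibilities for the initial predictor returned by $\cA$, any sequence of $t \in \{0,1,\dots,T\}$ updates produces a predictor belonging to a set of size at most
\[
|\cP_{\mathsf{PP}}| \;\le\; |\cP|\cdot \sum_{t=0}^{T}(2|\cC|)^t \;\le\; |\cP|\cdot(2|\cC|)^{T+1} \;=\; |\cP|(2|\cC|)^{1/\epsilon^2+1},
\]
where the second inequality is the routine geometric-series bound valid for $|\cC|\ge 1$.

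With this finite envelope in hand, for any fixed $p \in \cP_{\mathsf{PP}}$ and $C \in \cC$, the random variables $p(x_i)\one(x_i\in C)$ and $y_i\one(x_i\in C)$ lie in $[0,1]$, so Hoeffding's inequality gives that each of the two empirical averages in \Cref{def:uniform-cvg} deviates from its expectation by more than $\epsilon$ with probability at most $2\exp(-2n\epsilon^2)$. Union-bounding over the at most $|\cP|(2|\cC|)^{1/\epsilon^2+1}$ predictors, the $|\cC|$ subpopulations, and the two inequalities, the failure probability is at most $4|\cC|\cdot|\cP|(2|\cC|)^{1/\epsilon^2+1}\cdot\exp(-2n\epsilon^2)$. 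Requiring this to be at most $\delta$ and solving for $n$ yields the stated sample-size requirement $n \ge \log(|\cP|(2|\cC|)^{1/\epsilon^2+1}/\delta)/(2\epsilon^2)$, up to absorbable constants inside the logarithm.

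The only non-routine point, and the one I would emphasize in the write-up, is the distinction between a specific run of $\mathsf{PP}_{\cA}$ (whose choices of $(C,v_C)$ depend on the input data) and the \emph{combinatorial envelope} $\cP_{\mathsf{PP}}$, which is specified purely by $\cA$, $\cP$, $\cC$, and $\epsilon$, and is therefore independent of the fresh i.i.d.\ sample used in the uniform-convergence statement. This data-independence is exactly what allows us to fix $\cP_{\mathsf{PP}}$ before applying the union bound; once it is articulated, the counting above and the Hoeffding argument are entirely standard.
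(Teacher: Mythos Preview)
Your proposal is correct and follows essentially the same approach as the paper: bound $|\cP_{\mathsf{PP}}|$ via \Cref{lem:ma-emp-stopping-time} and the $2|\cC|$ per-iteration update choices, then apply Hoeffding plus a union bound over predictors and groups. Your geometric-sum bookkeeping and the explicit remark about the data-independence of the combinatorial envelope are, if anything, slightly more careful than the paper's write-up.
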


\begin{proof}[Proof of Lemma~\ref{lem:uniform-convergence}]
    Let $\cP_{\mathsf{PP}}$ be the family of predictors that can be output by $\mathsf{PP}_{\cA}$. We proceed by bounding the size of $\cP_{\mathsf{PP}}$.

    Note that the predictor output by the initialization step of $\mathsf{PP}_{\cA}$ must be from $\cP$, and so there are $|\cP|$ possible predictors that can be output at the initialization step of the algorithm. 

    From there, for any particular predictor output during initialization, note that at each iteration of the algorithm, there are at most $2|\cC|$ possible different updates that can be made to the predictor. 

    Thus, applying the result of Lemma~\ref{lem:ma-emp-stopping-time} that says the number of iterations is bounded by $1/\epsilon^2$, we can conclude that the number of possible predictors that can be output by the algorithm is at most 
    $\left(|\cP|\right)(2|\cC|)^{\frac{1}{\epsilon^2}}$.

    By Chernoff bounds, for any particular $p$ and $C$, the probability that 
    \[\left|\frac{1}{n}\sum_{i = 1}^n p(x_i)\one[x_i \in C] - \bE_{\cD}[p(x)\one[x \in C]\right| > \epsilon\]
    is upper bounded by $2\exp(-2n\epsilon^2)$. 

    Including the $y$s noted in the definition of uniform convergence, this means that we need the above condition to hold for a total of 
    \[\left(\left(|\cP|\right)(2|\cC|)^{\frac{1}{\epsilon^2}} + 1\right)|\cC|\]
    pairs of $p$ and $C$. 

    Thus, it suffices to take \[n \geq \frac{\log(|\cP|(2|\cC|)^{1/\epsilon^2 + 1}/\delta)}{2\epsilon^2}\]

    and thus $\cP_{\mathsf{PP}}$ satisfies $(\cC, n, \epsilon, \delta)$-uniform convergence for any $n \geq \frac{\log(|\cP|(2|\cC|)^{1/\epsilon^2 + 1}/\delta)}{2\epsilon^2}$.
\end{proof}

\begin{theorem}\label{thm:loss-min}
    Let $\cA:(X \times\{0, 1\}^*) \rightarrow \cP$ be a deterministic learning algorithm that is guaranteed to output from a finite set of predictors $\cP \subseteq [0, 1]^X$. Let $\mathsf{PP}_{\cA}$ be the algorithm defined by Algorithm~\ref{alg:ma-empirical} on input $\cA$ with parameter $\epsilon > 0$. Given a sample $S = \{(x_1, y_1), ..., (x_n, y_n)\}$ drawn i.i.d. from some distribution $\cD$ over $X \times \{0, 1\}$, let $p := \cA(S)$ be the predictor output by $\cA$ on $S$, and let $p_k$ be the predictor after $k \geq 0$ iterations of the post-processing update in $\mathsf{PP}_{\cA}(S)$. Then, for any $\delta \in [0, 1]$ and $n \geq \frac{\log(|\cP|(2|\cC|)^k)}{2\epsilon^2}$, we are guaranteed that with probability at least $1 - \delta$
    \[\bE_{\cD}[(y - p_k(x))^2] \leq \bE_{\cD}[(y - p(x))^2] + 2\epsilon - k\epsilon^2.\]
\end{theorem}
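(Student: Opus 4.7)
\textbf{Proof proposal for Theorem~\ref{thm:loss-min}.} The plan is to combine the deterministic per-iteration decrease in the empirical squared loss given by Lemma~\ref{lem:l2-dec} with a uniform-convergence argument, applied to the squared loss, over the combinatorially bounded family of predictors that can arise as $p_k$. Writing $\nu_S(q) := \frac{1}{n}\sum_{i=1}^n (y_i - q(x_i))^2$, I would first iterate Lemma~\ref{lem:l2-dec}: assuming the post-processing while loop actually executes $k$ times on input $S$, each of those $k$ iterations drops $\nu_S$ by at least $\epsilon^2$, so
\begin{equation*}
    \nu_S(p_k) \;\le\; \nu_S(p) - k\epsilon^2.
\end{equation*}
(For $k$ exceeding the number of executed iterations the statement still holds in the natural interpretation, since earlier iterations have already driven the empirical loss down by at least $k\epsilon^2$ before termination.)

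Next I would bound the set of possible $p_k$. Exactly as in the proof of Lemma~\ref{lem:uniform-convergence}, the initial predictor is one of $|\cP|$ elements, and each post-processing update is determined by a pair $(C, v_C) \in \cC \times \{\pm 1\}$; consequently the set of possible values of $p_k$, call it $\cP_k$, satisfies $|\cP_k| \le |\cP|(2|\cC|)^k$, and critically this set depends only on $\cA$ and the algorithm, not on $S$. Since $(y - q(x))^2 \in [0,1]$ for every $q \in [0,1]^X$ and $y \in \{0,1\}$, Hoeffding's inequality yields $\Pr\bigl[|\nu_S(q) - \bE_\cD[(y - q(x))^2]| > \epsilon\bigr] \le 2\exp(-2n\epsilon^2)$ for each fixed $q$. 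A union bound over the at most $|\cP| + |\cP_k| \le 2|\cP|(2|\cC|)^k$ relevant predictors, together with the hypothesized lower bound on $n$ (absorbing the $\log(1/\delta)$ contribution into the constants), shows that with probability at least $1 - \delta$ both $\nu_S(p)$ and $\nu_S(p_k)$ agree with their population counterparts up to $\epsilon$.

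On this good event the theorem follows by chaining:
\begin{equation*}
    \bE_\cD[(y - p_k(x))^2] \;\le\; \nu_S(p_k) + \epsilon \;\le\; \nu_S(p) - k\epsilon^2 + \epsilon \;\le\; \bE_\cD[(y - p(x))^2] + 2\epsilon - k\epsilon^2.
\end{equation*}

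The main subtlety is that $p_k$ is a data-dependent random predictor, so Hoeffding cannot be invoked on it directly; the resolution is the same cardinality-plus-union-bound device used in Lemma~\ref{lem:uniform-convergence}, transplanted from the linear MA statistic to the bounded squared loss. Once one is willing to pay the $\log |\cP_k|$ factor in the sample size, all remaining steps are standard potential-function and concentration manipulations, and the theorem follows.
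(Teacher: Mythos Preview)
Your proposal is correct and follows essentially the same route as the paper: iterate Lemma~\ref{lem:l2-dec} to get $\nu_S(p_k) \le \nu_S(p) - k\epsilon^2$, bound the cardinality of possible $p_k$'s by $|\cP|(2|\cC|)^k$ exactly as in Lemma~\ref{lem:uniform-convergence}, apply Hoeffding plus a union bound to the bounded squared loss, and chain the three inequalities. Your parenthetical remark that the stated sample-size lower bound tacitly absorbs a $\log(1/\delta)$ term is in fact a slightly more careful observation than what appears in the paper.
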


\begin{proof}[Proof of Theorem~\ref{thm:loss-min}]
    Note that by Lemma~\ref{lem:l2-dec}, we are guaranteed that after $k$ iterations, the empirical squared loss has decreased by at least $k\epsilon^2$, and thus
    \[\frac{1}{n}\sum_{i = 1}^n(y_i - p_k(x_i))^2 \leq \frac{1}{n}\sum_{i = 1}^n(y_i - p(x_i))^2 - k\epsilon^2.\]

    Note that by a Chernoff bound, for any fixed predictor $p$, we have that 

    \[\left|\bE_{\cD}[(y - p(x))^2] - \frac{1}{n}\sum_{i = 1}^n(y_i - p(x_i))^2\right| < \epsilon\]

    with probability at most $2\exp(-2n\epsilon^2)$. 

    Using the same reasoning as in Lemma~\ref{lem:uniform-convergence}, both $p$ and $p_k$ are guaranteed to belong to a subset of predictors of size at most 
    $|\cP|(2|\cC|)^k$. 

    Thus, taking $n \geq \frac{\log(|\cP|(2|\cC|)^k)}{2\epsilon^2}$ guarantees that with probability at least $1 - \delta$, we have both
    \[\left|\bE_{\cD}[(y - p(x))^2] - \frac{1}{n}\sum_{i = 1}^n(y_i - p(x_i))^2\right| < \epsilon\]
    and
    \[\left|\bE_{\cD}[(y - p_k(x))^2] - \frac{1}{n}\sum_{i = 1}^n(y_i - p_k(x_i))^2\right| < \epsilon\]

    Thus, substituting these inequalities into our empirical expression gives us
    \[\bE_{\cD}[(y - p_k(x))^2] \leq \bE_{\cD}[(y - p(x))^2] - k\epsilon^2 + 2\epsilon,\]
    completing the proof.
\end{proof}

\section{Lower Bounds}\label{sec:lower-bd-app}

Thus far, we've seen that certain multiaccurate learning algorithms can provide strong multigroup robustness guarantees while preserving performance guarantees when used as a post-processing step on an existing predictor. 

In this section, we explore whether multiaccuracy is a necessary condition of a multigroup robust algorithm under a weak non-triviality assumption. 

Clearly, a learning algorithm that always outputs the same predictor trivially satisfies multigroup robustness while not being multiaccurate, but is not a very useful learning algorithm. In order to circumvent these edge cases, we introduce a very weak non-triviality assumption, namely that an algorithm matches the overall mean of the true outcomes. 

\begin{definition}[Accuracy in Expectation]
We say a learning algorithm $\cA: (X \times \{0, 1\})^* \rightarrow [0, 1]^X$ is $(n, \varepsilon, \delta)$-accurate-in-expectation if given $n$ i.i.d. datapoints $(x_1,y_1), ..., (x_n, y_n)$ from any distribution $\cD$ over $X \times \{0, 1\}$, $\cA$ outputs a predictor $p = \cA((x_1, y_1), ..., (x_n, y_n))$ satisfying 
\[\left|\bE_{D}[y - p(x)]\right| \leq \varepsilon\]
with probability at least $1 - \delta$. 
\end{definition}

Note that this is a far weaker assumption than multiaccuracy, in that the predictor only needs to match the overall mean of the true outcomes, rather than on any particular groups. However, we will show that even this weak accuracy assumption paired with multigroup robustness implies multiaccuracy. 

We restate the theorem for readability:

\begin{theorem}[Lower Bound, Theorem~\ref{thm:lower-bound}]
Let $\cC$ be a class of subpopulations, $\cP \subseteq [0, 1]^X$ a family of predictors containing the all ones predictor $p(x) = 1$ for all $x \in X$, and $\cA: (X \times \{0, 1\}^*) \rightarrow \cP$ a deterministic learning algorithm. If $\cA$ is $(\cC, n, \varepsilon_1, \delta_1)$-multigroup robust and $(n, \varepsilon_2, \delta_2)$-accurate-in-expectation, and $\cP$ satisfies $(\cC, n, \epsilon_3, \delta_3)$-uniform convergence, then $\cA$ is a $(\cC, n,\epsilon_1 + \epsilon_2 + 2 \epsilon_3, 2\delta_1 + 2\delta_2 + \delta_3)$-multiaccurate learning algorithm. 
\end{theorem}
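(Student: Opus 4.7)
The high-level strategy is to use the accuracy-in-expectation hypothesis on two strategically chosen ``constant-label'' distributions, and leverage multigroup robustness to transfer the resulting control from these auxiliary predictors to $p = \cA(S)$ itself. Concretely, given an i.i.d. sample $S = \{(x_i, y_i)\}_{i=1}^n$ from $\cD$, I will define two auxiliary datasets that share the $x$-values of $S$ but have constant labels:
\[
S^+ := \{(x_i, 1)\}_{i=1}^n, \qquad S^- := \{(x_i, 0)\}_{i=1}^n.
\]
Crucially, $S^+$ is distributed as $n$ i.i.d.\ samples from $\cD^+ := \cD_X \otimes \delta_1$ (labels deterministically $1$), and similarly $S^-$ from $\cD^- := \cD_X \otimes \delta_0$. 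Thus accuracy-in-expectation applies to $p^\pm := \cA(S^\pm)$, while the fact that $S^\pm$ and $S$ share $x$-multisets makes the symmetric-difference term $|(S\Delta_X S^\pm)\cap C|$ vanish in the multigroup robustness bound.

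\textbf{Executing the steps.} First, accuracy-in-expectation on $\cD^+$ gives $|\bE_{\cD_X}[p^+(x)] - 1|\le \varepsilon_2$, and since $p^+ \in [0,1]$,
\[
\bE_{\cD_X}[p^+(x)\one(x\in C)] \;\ge\; \bE_{\cD_X}[p^+(x)] - \Pr[x\notin C] \;\ge\; \Pr[x\in C] - \varepsilon_2.
\]
Symmetrically, $\bE_{\cD_X}[p^-(x)\one(x\in C)] \le \varepsilon_2$. Second, applying multigroup robustness with $S' = S^+$ (resp.\ $S^-$), and using that the $x$-multisets agree,
\[
\bigl|\bE_{\cD_X}[(p - p^+)\one(x\in C)]\bigr| \le \tfrac{1}{n}\textstyle\sum_i (1-y_i)\one(x_i\in C) + \varepsilon_1,
\]
\[
\bigl|\bE_{\cD_X}[(p - p^-)\one(x\in C)]\bigr| \le \tfrac{1}{n}\textstyle\sum_i y_i\one(x_i\in C) + \varepsilon_1.
\]
Third, uniform convergence (invoked on the all-ones predictor, which lies in $\cP$, and on $y$) gives $\tfrac{1}{n}\sum_i\one(x_i\in C) \le \Pr[x\in C] + \varepsilon_3$ and $\tfrac{1}{n}\sum_i y_i\one(x_i\in C) \ge \bE_\cD[y\one(x\in C)] - \varepsilon_3$, so the empirical right-hand sides above are at most $\Pr[x\in C] - \bE_\cD[y\one(x\in C)] + 2\varepsilon_3$ and $\bE_\cD[y\one(x\in C)] + \varepsilon_3$ respectively.

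\textbf{Combining and the main obstacle.} Plugging these in, the $S^+$ direction yields the lower bound $\bE_{\cD_X}[p(x)\one(x\in C)] \ge \bE_\cD[y\one(x\in C)] - \varepsilon_1 - \varepsilon_2 - 2\varepsilon_3$, and the $S^-$ direction yields the matching upper bound $\bE_{\cD_X}[p(x)\one(x\in C)] \le \bE_\cD[y\one(x\in C)] + \varepsilon_1 + \varepsilon_2 + 2\varepsilon_3$, giving the desired multiaccuracy inequality. A union bound over the failure events for multigroup robustness, accuracy-in-expectation on $\cD^+$ and $\cD^-$, and uniform convergence yields the stated probability bound. The main conceptual obstacle is step two: accuracy-in-expectation only constrains the \emph{overall mean} of $p^\pm$, not its behavior on $C$. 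The key trick is that precisely because $p^\pm \in [0,1]$, a bound on the global mean being close to $0$ or $1$ forces the group-wise integral $\bE[p^\pm(x)\one(x\in C)]$ to be close to the extreme value $0$ or $\Pr[x\in C]$, which is exactly what is needed to transport the bound to the subgroup level. The role of the all-ones predictor being in $\cP$ is to supply a uniform-convergence estimate of $\Pr[x\in C]$.
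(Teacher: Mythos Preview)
Your proposal is correct and follows essentially the same approach as the paper: define the two constant-label auxiliary datasets $S^\pm$ sharing the $x$-values of $S$, use accuracy-in-expectation on the induced distributions together with the $[0,1]$-boundedness of $p^\pm$ to control $\bE_{\cD_X}[p^\pm(x)\one(x\in C)]$, invoke multigroup robustness (with vanishing symmetric-difference term) to pass from $p^\pm$ to $p$, and finish via uniform convergence on the all-ones predictor and on $y$. The paper's proof is structurally identical (it writes $S_0,S_1$ for your $S^-,S^+$) and arrives at the same two one-sided bounds and the same union-bound accounting for the failure probability.
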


\begin{proof}[Proof of Theorem~\ref{thm:lower-bound}]
    Let $\cA$ be the algorithm described in the theorem that is $(\cC, n, \varepsilon_1, \delta_1)$-multigroup robust and $(n, \varepsilon_2, \delta_2)$-accurate-in-expectation. 

    Consider any $\cD$ over $X \times \{0, 1\}$ with marginal $\cD_X$ and let $S = \{(x_1, y_1), ..., (x_n, y_n)\}$ be drawn i.i.d. from $\cD$. 

    Consider the two amended datasets $S_0 = \{(x_1, 0), ..., (x_n, 0)\}$ and $S_1 = \{(x_1, 1), ..., (x_n, 1)\}$. Let $p_0 = \cA(S_0)$, $p_1 = \cA(S_1)$, $p = \cA(S)$. 

    Because $S_0$ and $S_1$ are identical to i.i.d. samples from distributions with the same marginal distribution but all zeros or all ones, we know that with probability at least $1 - 2\delta_2$, we have
    $$\bE_{D_X}[p_0(x)] \leq \varepsilon_2, \bE_{D_X}[1 - p_1(x)] \leq \varepsilon_2.$$

    When these two inequalities hold true, we also have for all $C \in \cC$:
    \[\bE_{D_X}[p_0(x)\one[x \in C]] \leq \epsilon_2, \]
    \[\bE_{D_X}[p_1(x)\one[x \in C]] \geq \Pr_{D_X}[x \in C] - \epsilon_2.\]

    Now, by multigroup robustness, we are guaranteed that with probability at least $1 - 2\delta_1$, we have the following two inequalities for all $C \in \cC$:

    \[\left|\bE_{D_X}[(p_0(x) - p(x))\one[x \in C]\right| \leq \frac{1}{n}\sum_{i = 1}^n y_i\one[x_i \in C] + \epsilon_1\]
    \[\left|\bE_{D_X}[(p_1(x) - p(x))\one[x \in C]\right| \leq  \frac{1}{n}\sum_{i = 1}^n (1 - y_i)\one[x_i \in C] + \epsilon_1\]

    Simplifying the first inequality gives us 
    \begin{align*}
        &\bE_{D_X}[p(x)\one[x \in C]] \\&\leq \bE_{D_X}[p_0(x)\one[x \in C]] + \frac{1}{n}\sum_{i = 1}^n y_i\one[x_i \in C] + \epsilon_1\\
        &\leq \frac{1}{n}\sum_{i = 1}^n y_i\one[x_i \in C] + \epsilon_1 + \epsilon_2\\
        &\leq \bE_{D}[y\one[x \in C]] + \epsilon_1 + \epsilon_2 + \epsilon_3
    \end{align*}
    Where the last step holds by uniform convergence for all $C \in \cC$ with probability at least $1 - \delta_3$. 

    Similarly, the second inequality simplifies to 
    \begin{align*}
        &\bE_{D_X}[p(x)\one[x \in C]] \\&\geq \bE_{D_X}[p_1(x)\one[x \in C]] +  \frac{1}{n}\sum_{i = 1}^n y_i - \frac{1}{n}\sum_{i = 1}^n\one[x_i \in C] - \epsilon_1\\
        &\geq \Pr_{D_X}[x \in C] +  \frac{1}{n}\sum_{i = 1}^n y_i\one[x_i \in C] - \frac{1}{n}\sum_{i = 1}^n\one[x_i \in C] - \epsilon_1 - \epsilon_2\\
        &\geq \bE_{D}[y\one[x \in C]] - \epsilon_1 - \epsilon_2 - 2\epsilon_3
    \end{align*}
    where the last step follows by two applications of uniform convergence and our assumption that $\cP$ contains the constant ones predictor, and also hold simultaneously for all $C \in \cC$ with probability at least $1 - \delta_3$. 

    Thus, we conclude that with probability at least $1 - (2\delta_1 + 2\delta_2 + \delta_3)$, it holds that
    \[\left|\bE_D[(p(x) - y)\one[x \in C]\right| \leq \epsilon_1 + \epsilon_2 + 2 \epsilon_3\]
    for all $C \in \cC$ and thus $\cA$ is $(\cC, n,\epsilon_1 + \epsilon_2 + 2 \epsilon_3, 2\delta_1 + 2\delta_2 + \delta_3)$-multiaccurate.
\end{proof}
\section{Full Experiments}
\subsection{Models and Datasets}
Due to the fairness focus of our work, we use Folktables ~\cite{ding2021retiring},  a modern version of the UCI Adult Dataset based on the yearly American Community Survey. In this section, we expand on the Income experiments in the main text to other tasks. We look at 3 binary tasks with different feature dimensions, subgroup ratios, and positive class ratios (Table \ref{tab:datasets}). These datasets were taken from a mid-sized state where there is significant racial diversity: Louisiana (the 25th most populous state in the US). For each task, we consider subgroups defined by race and sex. We consider the three most common racial groups as coded by: \textit{White Alone} (62\%), \textit{Black or African American Alone} (32.8\%), and \textit{Asian Alone} (2\%). Although the Hispanic and Latino populations constitute 5.6\% of the population in Louisiana, \footnote{\url{https://www.census.gov/quickfacts/fact/table/LA/RHI725222\#RHI725222}} the Census Bureau officially states that \textit{``People who identify their origin as Hispanic, Latino, or Spanish may be of any race"}\footnote{\url{https://www.census.gov/quickfacts/fact/note/US/RHI625222}}. Thus we include the three aforementioned racial identifiable groups for our experiments.    

\begin{table}
\begin{tabular}{|p{2cm}|p{1.5cm}|p{1.5cm}|p{1.5cm}|p{1.5cm}|p{1.5cm}|p{1.5cm}|p{1.5cm}|l|}
\hline
Dataset             & Task                               & n     & Features & Positive Class Ratio & White Ratio & Black or AA Ratio & Asian Ratio & Male Ratio \\ \hline
ACS Income          & Income $>50k$                      & 20667 & 47            & 0.332                & 0.710       & 0.235             & 0.020       & 0.507      \\ \hline
ACS Employment      & Employed                           & 43589 & 69            & 0.414                & 0.675       & 0.266             & 0.018       & 0.483      \\ \hline
ACS Public Coverage & Covered by Public Health Insurance & 16879 & 76            & 0.406                & 0.589       & 0.344             & 0.023       & 0.424      \\ \hline
\end{tabular}
\caption{Task summary for American Community Survey Dataset for the state of Louisiana 2018 dataset.}
\label{tab:datasets}
\end{table}

For this task, we seek to predict whether the income of individuals was above \$50k and we examine subgroups defined by race and sex. We compare robustness to different attacks in three settings for 5 different models: 
\begin{itemize}
    \item \textsc{MAEmp}: Multiaccuracy Boost on Empirical Distribution (Algorithm \ref{alg:ma-empirical})
    \item \textsc{Clf}: Base classifier model: We evaluate performance across 5 models of different model classes
    \begin{itemize}
        \item LR: Logistic Regression Classifier. Probabilities from this model are probability estimates of each class from the logistic function. 
        \item DT: Decision Tree classifier. Probabilities from this model are the fraction of samples of the same class in a leaf\footnote{\url{https://scikit-learn.org/stable/modules/generated/sklearn.tree.DecisionTreeClassifier.html}}. 
        \item kNN: k-Nearest Neighbors algorithm with parameter search from 3, 5, and 7 nearest neighbors. Probabilities from this model represent the proportion of neighbors in the predicted class.  
        \item XGB: XGBoost algorithm using the default \textit{binary-logistic} objective. Probabilities are class probabilities summed and normalized across trees within the boosting system. 
        \item MLP: Multi-Layer Perceptron neural network with 2 layers after performing hyperparameter search over the learning rate and $l_2$ regularization weight. Probabilities are the output of the logistic function for each class. 
    \end{itemize}
    \item \textsc{Clf-PP}: Base classifier with post-processing using Multiaccuracy Boost on Empirical Distribution. 
\end{itemize}
To evaluate each model, we report subgroup accuracy and multi-accuracy error for each model at different noise rates. More formally, for subgroup $C$:  
\begin{itemize}
    \item $\mathsf{Accuracy}_{\cD}(p, C)$: $\frac{1}{|C|}\sum_{(x_i, y_i) \in C }\one[y_i = \one[p(x_i)>\gamma]]$
    \item $\mae_{\cD}(p, C)$: $\frac{1}{n}\sum_{i = 1}^n(p(x_i) - y_i)\one(x_i \in C)$
\end{itemize}
We seek to measure these quantities since multi-accuracy error is the supremum of this $\mae_{\cD}(p, C)$ across all identifiable subgroups $C \in \cC$ and $\mathsf{Accuracy}$ gives an average classification metric where the $\gamma$ threshold is optimized on a held-out validation set. We measure all of these results on a test set while both training and post-process with Algorithm $\ref{alg:ma-empirical}$ are done on the training set.

\subsection{Label-Change}

\begin{algorithm}
    \caption{Label Change Algorithm}
    \label{alg:label-change}
    \begin{algorithmic}
       \STATE {\bfseries Parameters:} target: $t \in \{0, 1, *\}$, modify group $C \subseteq \{0, 1\}^X$, noise ratio $\sigma \in [0, 1]$. 
       \STATE {\bfseries Input: } clean dataset  $D = (x_1, y_1), ..., (x_n, y_n) \in X \times \{0, 1\}$
       \STATE {\bfseries Output: } corrupted dataset  $D' = (x_1, y_1'), ..., (x_n, y_n') \in X \times \{0, 1\}$
        \STATE $D' = \{\}$ 
        \FORALL{$(x_i, y_i) \in D$}
        \STATE $z \sim Uniform([0, 1])$ 
        \IF{$z < \sigma$ and $y_i == t$}
        \STATE $y_i' = 1 - y_i$
        \ELSE
        \STATE $y_i' = y_i$
        \ENDIF
        \STATE $D' = D' \cup (x_i, y_i')$ 
        \ENDFOR
        \STATE {\bfseries Return: }$D'$
    \end{algorithmic}
\end{algorithm}
\paragraph{Designing Label Shift}
First, we consider robustness to label noise when the set of training examples is unchanged (See Section \ref{sec:multi-robust}). We randomly (1) flip labels, (2) shift labels to 1, and (3) shift labels to 0 within a subgroup and observe the effects on other subgroups. Algorithm \ref{alg:label-change} describes the generation of all three types of label change. The noise rate in this corruption represents the proportion of data points in the modified subgroup that has been shifted.

\begin{figure}
         \centering
         \includegraphics[width=\textwidth]{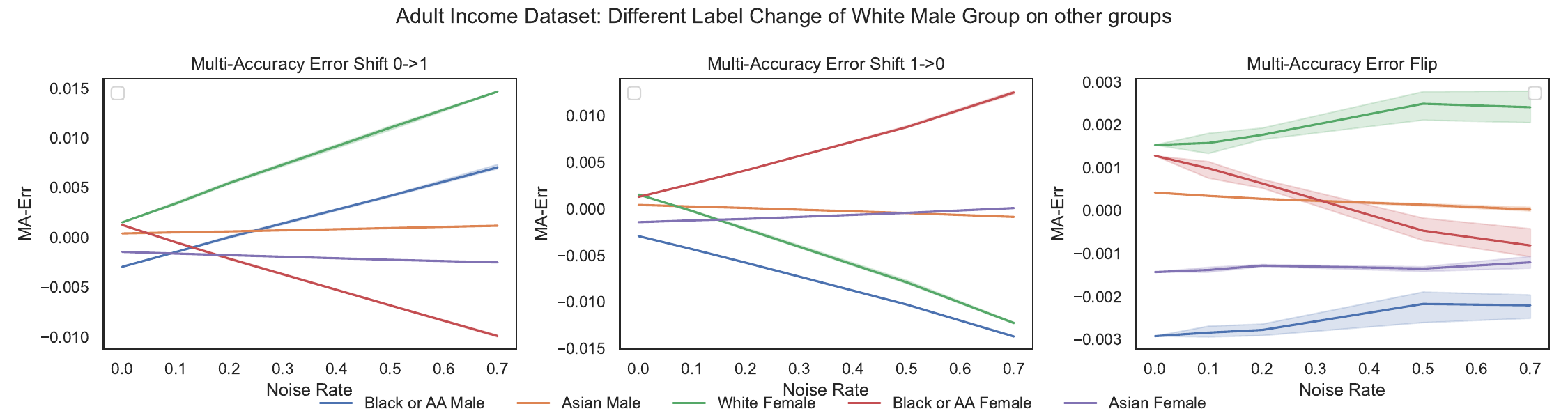}
         \caption{\small Effect of label change in the white male group on other groups with a Logistic Regression Classifier. As the amount of noise increases, the $\mae$ of the resulting predictor grows away from 0. The direction depends on whether the shift in label is from 0 to 1 or from 1 to 0.}
         \label{fig:label-change-white-male}
\end{figure}
 \begin{figure}
     \centering
     \includegraphics[width=\textwidth]{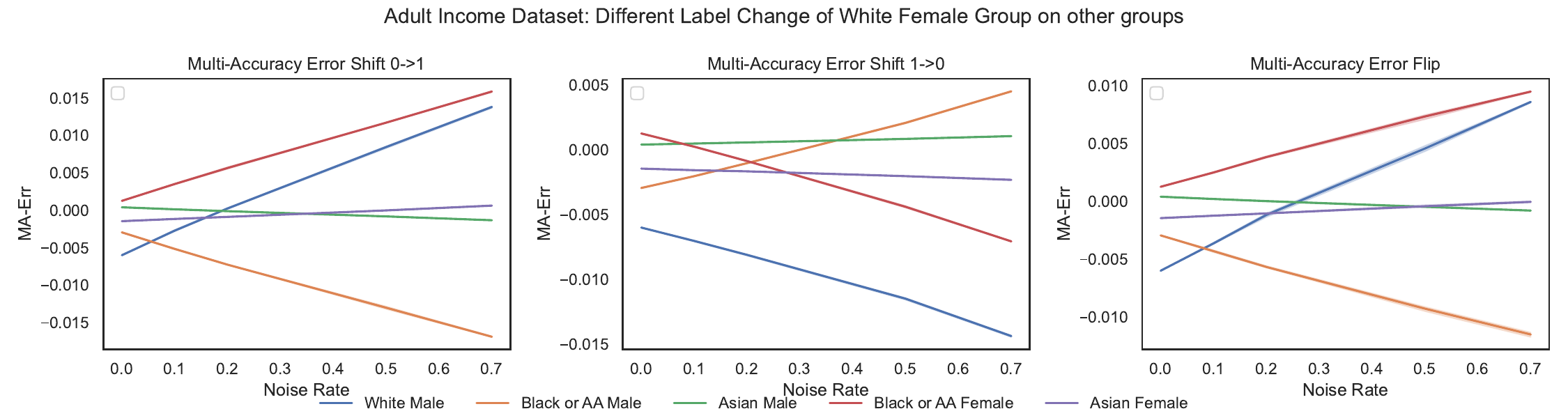}
     \caption{\small Effect of label change in the white female group on other groups with a Logistic Regression Classifier. In this group, flipping the labels has a large effect on the Black population.}
     \label{fig:label-change-white-female}
 \end{figure}
 \begin{figure}
     \centering
     \includegraphics[width=\textwidth]{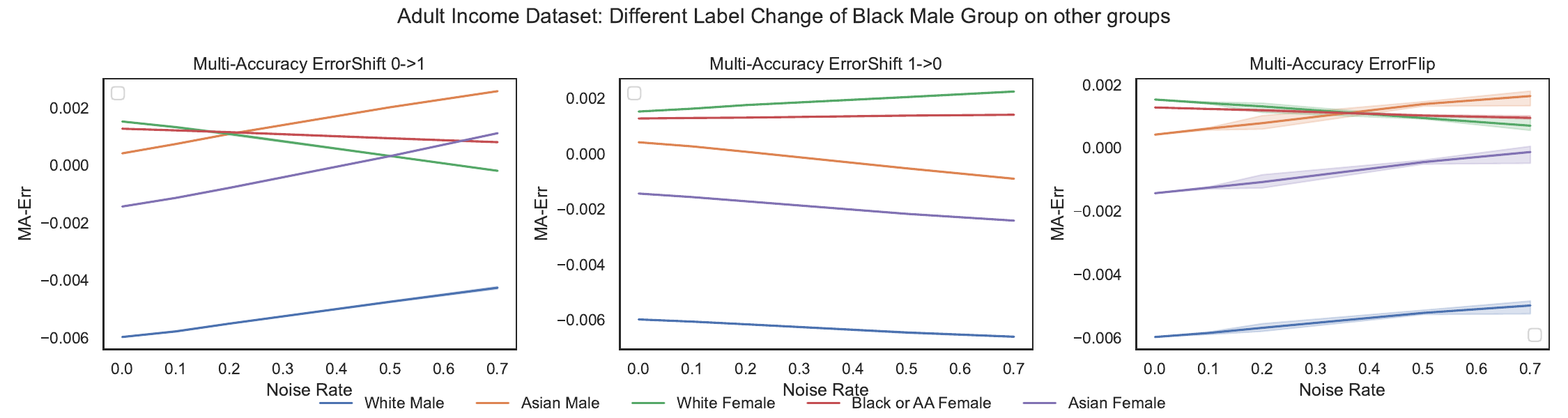}
     \caption{\small Effect of label change in the black male group on other groups with a Logistic Regression Classifier. Since this group is a smaller fraction of the population, the induced change in $\mae$ is much smaller.}
     \label{fig:label-change-black-male}
 \end{figure}
 
We compare the different effects of label shifts first on the Logistic Regression model to understand the impact of label change. Figures \ref{fig:label-change-white-male}, \ref{fig:label-change-white-female}, and \ref{fig:label-change-black-male} show the effect of different types to label change when different groups are modified. We observed significant changes in $\mae$ on other groups particularly when the white male group and white female group are modified by shifting the labels from 0 to 1. We proceed with experiments in label change by modifying the white male group by shifting labels from 0 to 1 in different ratios. These preliminary results also show that due to the low ratio of the Asian population, $\mae$ will remain small for this population since the normalization term is the total test set size. Moving forward, we will use the 4 larger subpopulations white-male, white-female, black or AA-male, and black or AA female to measure the results of our proposed algorithm. 
\begin{figure}
         \centering
         \includegraphics[width=\textwidth]{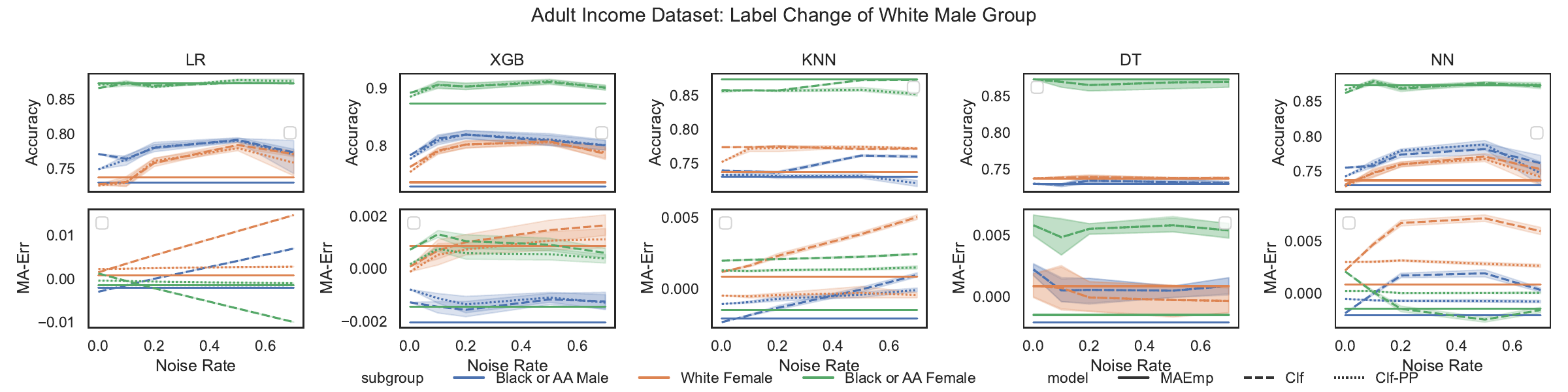}
         \caption{Comparison of different predictors on corrupted data with label change at various ratios on the ACSIncome Dataset. When labels are shifted from 0 to 1 in the white male group, baseline classifiers such as Logistic Regression, k-Nearest Neighbors, and Neural Network (MLP) also become more biased for other subgroups (i.e., increasing $\mae$). However, both the uniformly initialized and post processed predictor using Algorithm \ref{alg:ma-empirical} are robust to label change in other groups.}
         \label{fig:results-shift-white-male-income}
\end{figure}
\begin{figure}
         \centering
         \includegraphics[width=\textwidth]{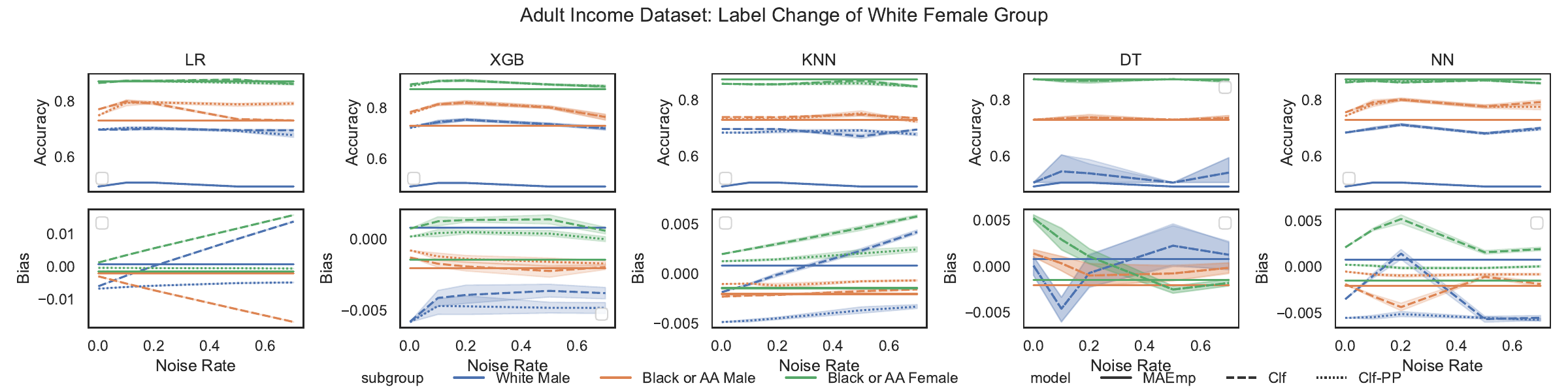}
         \caption{Comparison of different predictors on corrupted data with label change at various ratios on the ACSIncome Dataset: 0 to 1 label shift in the white female group. A similar result as modifying the white male group follows.}
         \label{fig:results-shift-white-female-income}
\end{figure}

\paragraph{Robustness to label change through Multi-Accuracy Boosting and Post Processing}
For the ACSIncome dataset, Figure \ref{fig:results-shift-white-male-income} (Also Figure 1 in the main text) shows the effect of randomly shifting labels in the White Male group on three other groups: White Female, Black or African American Male, and Black or African American Female. For Logistic Regression, we see that the base classifier (dotted line), results in increased bias as the noise rate increases while the \textsc{MAEmp} predictor retains a low and consistent level of multi-accuracy guarantee. Furthermore, post-processing the Linear Regression classifier with Empirical MABoost also results in similar accuracy to the original model and multi-accuracy regardless of the level of noise. A similar phenomenon is observed in the neural network (NN) and k-nearest neighbors models where \textsc{MAEmp} and \textsc{Clf-PP} are robust to label noise. Moreover, we observe that reducing $\mae$ comes at no cost to the overall accuracy: the accuracies of the post-processed predictors are comparable to the original predictors. In Figure \ref{fig:results-shift-white-female-income}, we observe a similar phenomenon but in the white male group as well as both black male and female groups.

For the ACS employment dataset, there is a significantly larger positive class ratio (Table \ref{tab:datasets}) but similar subgroup ratios since this dataset comes from the same state. In Figure \ref{fig:results-shift-white-male-employment}, we see a similar pattern to the income data set in terms of the multi-accuracy boosting algorithm producing predictors that are robust to changes in disjoint groups. Here we see that the accuracy of the post proceed predictor can be higher than before pre-processing (e.g., Logistic Regression - black female and black male subgroups). We see similar patterns in the predictor robustness when the white female group is modified in the ACSEmployment dataset. 

\begin{figure}
     \centering
     \begin{subfigure}
         \centering
         \includegraphics[width=\textwidth]{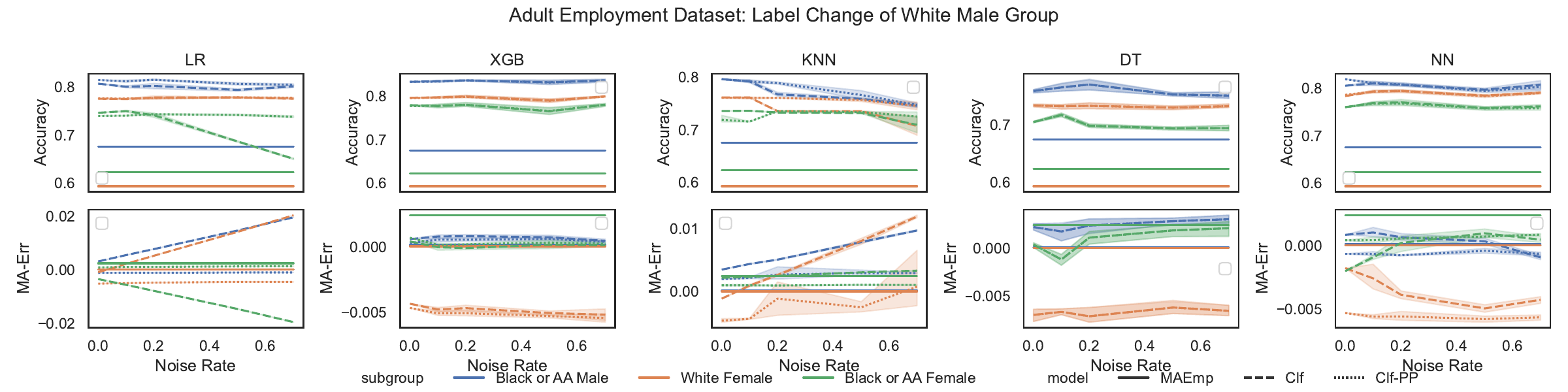}
         \caption{Comparison of different predictors on corrupted data in the white male group with label change at various ratios on the ACS Employment.}
         \label{fig:results-shift-white-male-employment}
     \end{subfigure}
     \hfill
     \begin{subfigure}
         \centering
         \includegraphics[width=\textwidth]{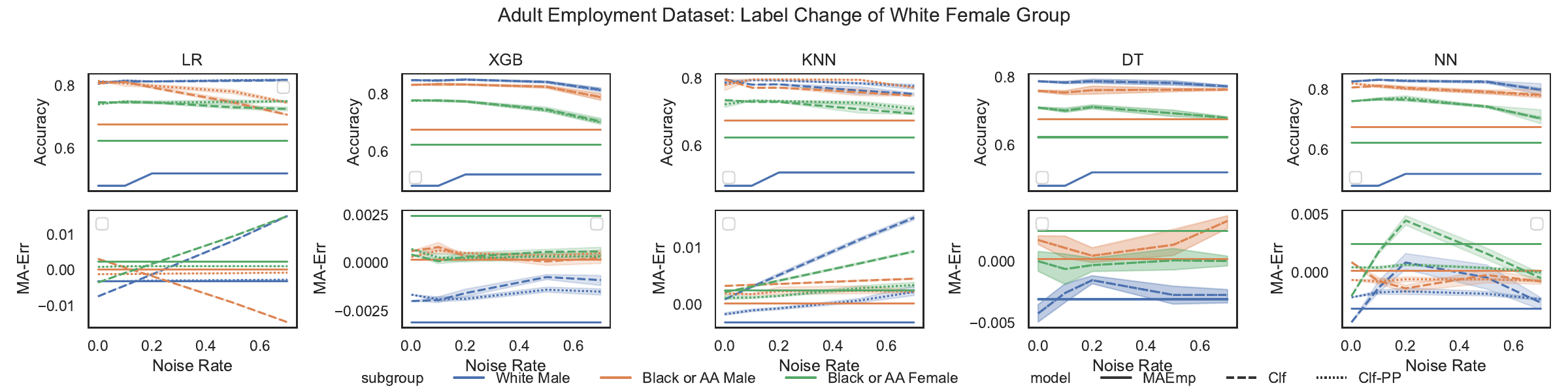}
         \caption{Comparison of different predictors on corrupted data in the white female group with label change at various ratios on the ACS Employment dataset.}
         \label{fig:results-shift-white-female-employment}
     \end{subfigure}
\end{figure}

For the ACS Coverage dataset, the predictors are trained to predict the probability that an individual is covered by public health insurance. Figure \ref{fig:results-shift-white-male-coverage} and \ref{fig:results-shift-white-female-coverage} illustrate the results for label shift from 0 to 1 in the white male and white female subgroups respectively. For shifts in both these subgroups, the impact on other groups is only significant for the logistic regression classifier. For logistic regression, we observe the base models suffering from more biased predictions ($\mae$) on unmodified groups as the level of noise increases while the post-processed logistic regression predictor retains a consistently low $\mae$ for all unmodified subgroups. 

\begin{figure}
     \centering
     \begin{subfigure}
         \centering
         \includegraphics[width=\textwidth]{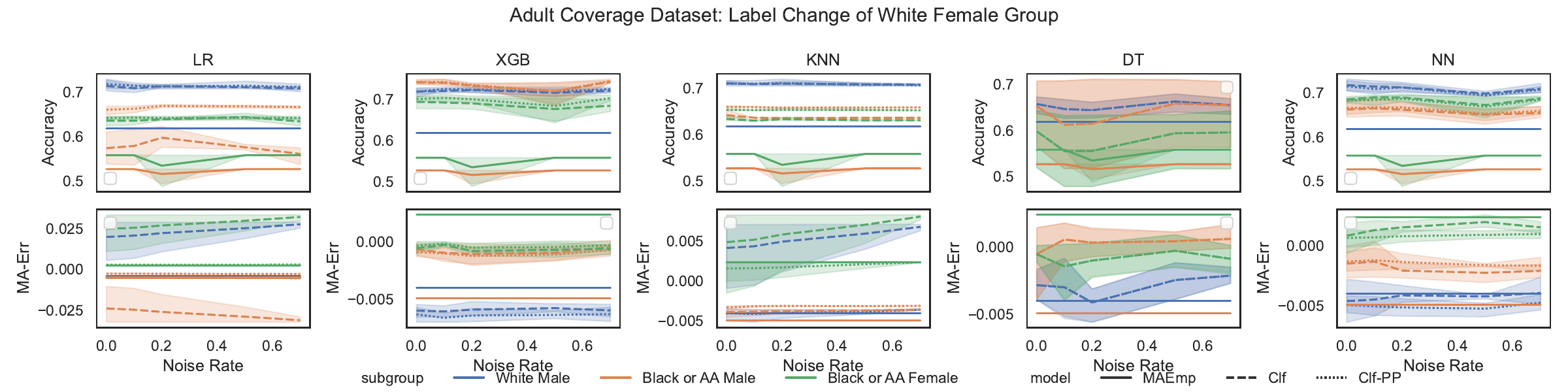}
         \caption{Comparison of different predictors on corrupted data in the white male group with label change at various ratios on the ACS Coverage dataset.}
         \label{fig:results-shift-white-male-coverage}
     \end{subfigure}
     \hfill
     \begin{subfigure}
         \centering
         \includegraphics[width=\textwidth]{coverage_shift_white-female.pdf}
         \caption{Comparison of different predictors on corrupted data in the white female group with label change at various ratios on the ACS Coverage dataset.}
         \label{fig:results-shift-white-female-coverage}
     \end{subfigure}
\end{figure}

\paragraph{Boosting Classifiers and Multi-Accuracy Error} Across all datasets (e.g., Income, Coverage, and Employment), we observe that the boosting classifiers are relatively calibrated already. For Decision Trees, in particular, the base and post-processed predictors are the same. This is due to the fact that we did not limit the depth of the model, the tree branches on subgroups since subgroup labels are a part of the features. A similar phenomenon exists for XGBoost where there is some randomization that allows some post-processing improvements but the base classifier is already multi-group robust. This is not surprising since for both Decision Trees and XGBoost, the ability to branch exactly based on group membership would reduce the effect of noise and corruption of unrelated subgroups. 

\subsection{Addition/Deletion}

We also demonstrate the robustness of Algorithm \ref{alg:ma-empirical} through corruptions to data in the form of additional data points. We employ a similar strategy as prior work designing subpopulation attacks~\cite{jagielski2021subpopulation} with the additional constraint that only data points outside of the target group can be added to create the poisoning dataset. To find data points to add that would affect the target group, we first cluster data points in a held-out set (not used for training or testing) using K-means. For each cluster where the target subgroup appears, we shift the labels of the group we are allowed to modify. We then add these shifted points from the held-out group to the training data. The amount of noise in this attack is scaled by how many times the identified data points are replicated before being added to the poisoned dataset. Algorithm \ref{alg:data-addition} describes the algorithm we use to generate a corrupted dataset $D'$ of size $m$. For these data poisoning attacks, we also target $0$ labels in each cluster to shift. This attack is more precise than random shifts in subgroups since there is a specific targeted subgroup. 

\begin{algorithm}
    \caption{Data Addition Algorithm}
    \label{alg:data-addition}
    \begin{algorithmic}
       \STATE {\bfseries Parameters:} modify group and target group: $C_{mod}, C_{tgt} \subseteq \{0, 1\}^X$, noise factor $\alpha \in [1, 10]$, num clusters $k$, cluster threshold $\gamma$, target: $t \in \{0, 1, *\}$
       \STATE {\bfseries Input: } clean dataset  $D = (x_1, y_1), ..., (x_n, y_n) \in X \times \{0, 1\}$, held out dataset $D_{aux}$
       \STATE {\bfseries Output: } corrupted dataset  $D' = (x_1, y_1'), ..., (x_m, y_m') \in X \times \{0, 1\}$
        \STATE $D' = \{\}$ 
        \STATE centers = \textsc{Cluster}$(D_{aux}, k)$
        \FORALL{$c \in $ centers}
        \STATE $D_c = \{(x, y) | \textsc{ClosestCenter}((x,y)) = c, (x, y) \in D_{aux} \}$
        \STATE $D_{tgt} = \{(x, y) | (x, y) \in C_{tgt}, (x, y) \in D_c \}$
        \STATE $D_{mod} = \{(x, y) | (x, y) \in C_{mod}, (x, y) \in D_c \}$
        \IF{$|D_{tgt}| \ge \gamma$}
        \FORALL{$(x, y) \in D_{mod}$}
        \IF{$y == t$} 
        \STATE $y' = 1 - y$
        \STATE $D' = D' \cup {(x, y')}^{\alpha}$ 
        \ENDIF
        \ENDFOR 
        \ENDIF
        \ENDFOR
        \STATE {\bfseries Return: }$D' = D' \cup D $
    \end{algorithmic}
\end{algorithm}
\begin{figure}
     \centering
     \begin{subfigure}
         \centering
         \includegraphics[width=\textwidth]{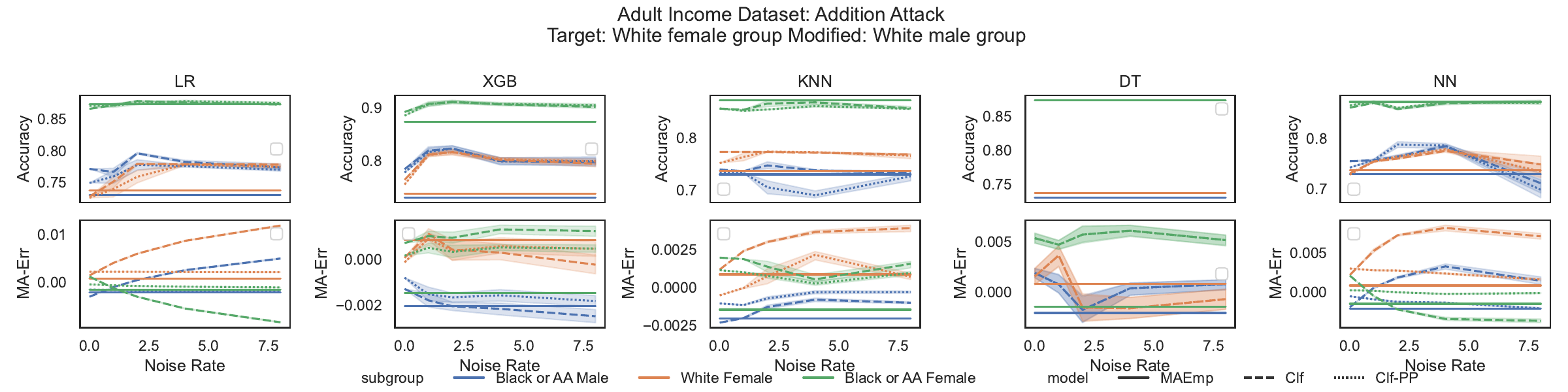}
         \caption{Comparison of different predictors on a corrupted dataset with injected data (Algorithm \ref{alg:data-addition}) at various levels on the ACSIncome Dataset. Baseline classifiers such as Logistic Regression, k-nearest Neighbors, and Neural Networks (MLP) become more biased (i.e., increasing $\mae$) for the target subgroup ($C_{tgt}$: White female) even though only data from the white male group ($C_{mod}$) has been added. However, both the uniformly initialized and post-processed predictor using Algorithm \ref{alg:ma-empirical} are robust to label change in other groups while preserving comparable accuracy.}
         \label{fig:results-addition-twhite-female-mwhite-male-income}
     \end{subfigure}
     \hfill
     \begin{subfigure}
         \centering
         \includegraphics[width=\textwidth]{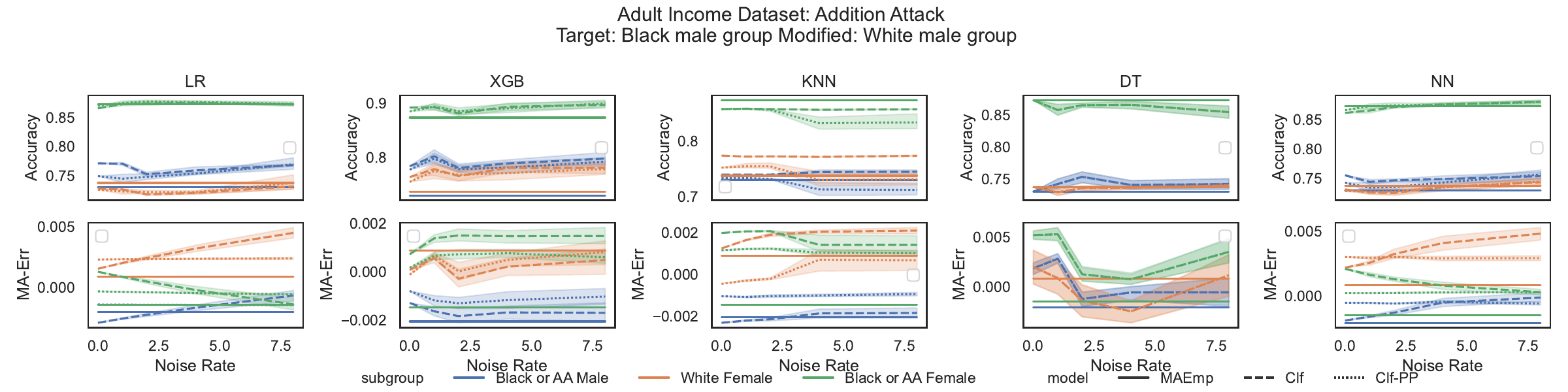}
         \caption{Comparison of different predictors on a corrupted dataset with injected data from the white male group ($C_{mod}$) with the black male group as the target group ($C_{tgt}$) at various levels on the ACSIncome Dataset.}
         \label{fig:results-addition-tblack-male-mwhite-male-income}
     \end{subfigure}
\end{figure}

In Figure \ref{fig:results-addition-twhite-female-mwhite-male-income} (Figure \ref{fig:type2-addition} in the main text), we see that even at low levels of noise (i.e., poisoned points are replicated once or twice), the Logistic Regression, k-Nearest Neighbors, and Neural Network base classifiers (\textsc{Clf}) exhibit worsened multi-accuracy error. However, with post-processing (\textsc{Clf-PP}), the mult-accuracy error remains consistent around 0 regardless of the noise ratio. Moreover, the benefits of low multi-accuracy error come without cost to the accuracy. These effects are not just on the target group, white female, but also appear in other subgroups such as the black male and female subgroups. Figure \ref{fig:results-addition-tblack-male-mwhite-male-income} shows the effects when the target group is the black male group where a similar effect exists.  

\begin{figure}
     \centering
     \begin{subfigure}
         \centering
         \includegraphics[width=\textwidth]{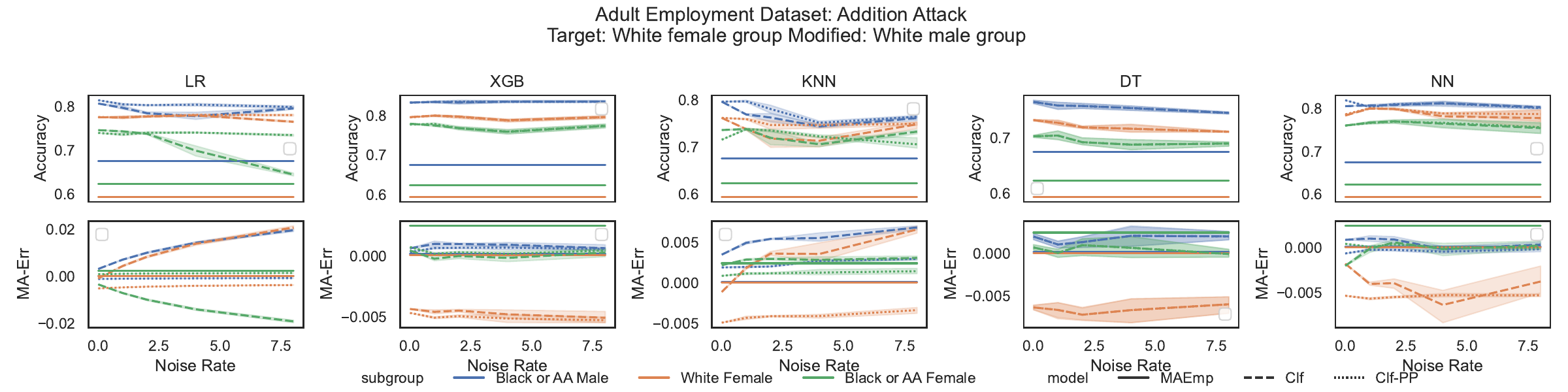}
         \caption{Comparison of different predictors on a corrupted dataset with injected data from the white male group ($C_{mod}$) with the white female group as the target group ($C_{tgt}$) at various levels on the ACSEmployment Dataset.}
         \label{fig:results-addition-twhite-female-mwhite-male-emp}
     \end{subfigure}
     \hfill
     \begin{subfigure}
         \centering
         \includegraphics[width=\textwidth]{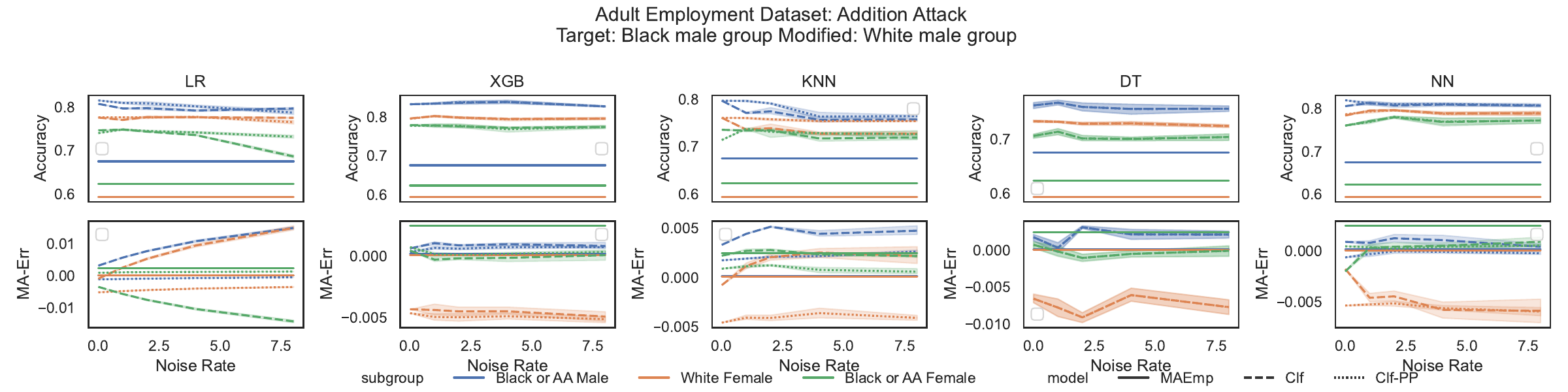}
         \caption{Comparison of different predictors on a corrupted dataset with injected data from the white male group ($C_{mod}$) with the black male group as the target group ($C_{tgt}$) at various levels on the ACSEmployment Dataset.}
         \label{fig:results-addition-tblack-male-mwhite-male-emp}
     \end{subfigure}
\end{figure}

\begin{figure}
     \centering
     \begin{subfigure}
         \centering
         \includegraphics[width=\textwidth]{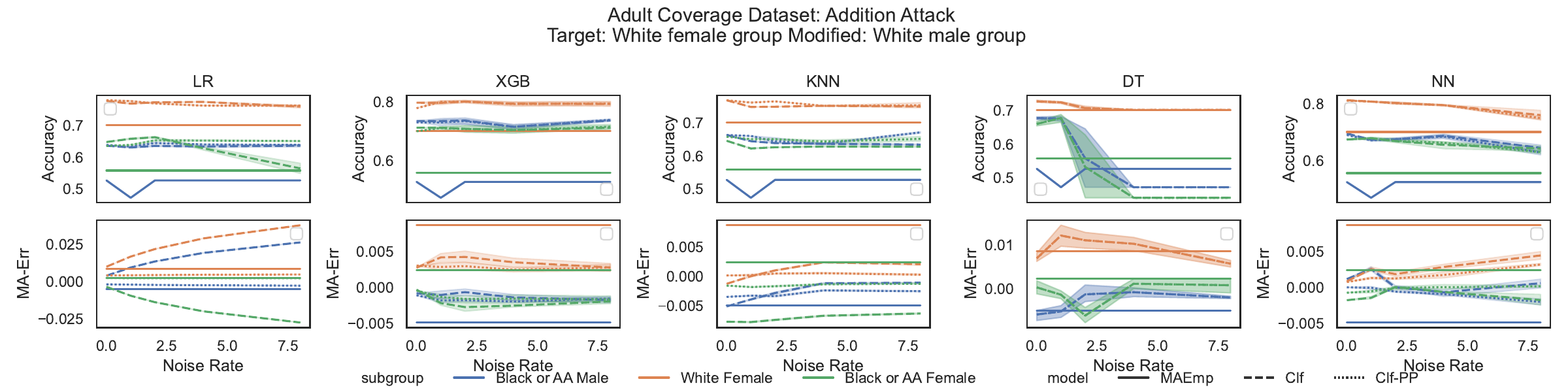}
        \caption{Comparison of different predictors on a corrupted dataset with injected data from the white male group ($C_{mod}$) with the white female group as the target group ($C_{tgt}$)  at various levels on the ACS Coverage Dataset.}
         \label{fig:results-addition-twhite-female-mwhite-male-cov}
     \end{subfigure}
     \hfill
     \begin{subfigure}
         \centering
         \includegraphics[width=\textwidth]{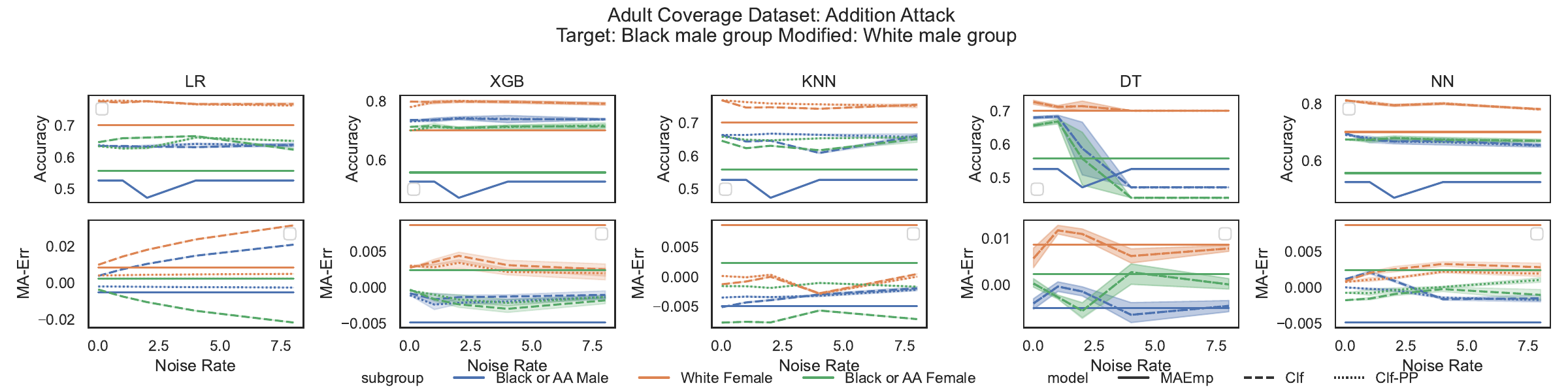}
         \caption{Comparison of different predictors on a corrupted dataset with injected data from the white male group ($C_{mod}$) with the black male group as the target group ($C_{tgt}$) at various levels on the ACS Coverage Dataset.}
         \label{fig:results-addition-tblack-male-mwhite-male-cov}
     \end{subfigure}
\end{figure}

\end{document}